\def\eqref#1{equation~\ref{#1}}
\def\1{\bm{1}}
\def\eps{{\epsilon}}
\DeclareMathAlphabet{\mathsfit}{\encodingdefault}{\sfdefault}{m}{sl}
\SetMathAlphabet{\mathsfit}{bold}{\encodingdefault}{\sfdefault}{bx}{n}
\newcommand{\E}{\mathbb{E}}
\newcommand{\R}{\mathbb{R}}
\definecolor{dmorange500}{HTML}{FF5F19}
\definecolor{dmblue300}{HTML}{2267EB}
\definecolor{dmred300}{HTML}{FF617B}
\runningauthor{Guo, Azar, Saade, Thakoor, Piot, Pires, Valko, Mesnard, Lattimore, Munos}
\begin{document}

\newcommand{\Exp}{\mathds{E}}
\newcommand{\Expk}{\mathds{E}_{k}}
\newcommand{\Nat}{\mathbb{N}}
\newcommand{\Ind}{\mathds{1}}
\newcommand{\Rmax}{R_{\rm max}}
\newcommand{\riskyopt}{\succcurlyeq_{\rm ro}}
\newcommand{\cid}{\succcurlyeq_{\rm CID}}
\newcommand{\so}{\succcurlyeq_{\rm so}}
\newcommand{\single}{\succcurlyeq_{\rm sc}}
\newcommand{\ssd}{\succcurlyeq_{\rm ssd}}
\newcommand{\pp}{\mathrel{+}\mathrel{+}}
\newcommand{\Xc}{\mathcal{X}}
\newcommand{\Yc}{\mathcal{Y}}
\newcommand{\Pc}{\mathcal{P}}
\newcommand{\Qc}{\mathcal{Q}}
\newcommand{\Ec}{\mathcal{E}}
\newcommand{\Fc}{\mathcal{F}}
\newcommand{\Gc}{\mathcal{G}}
\newcommand{\Rc}{\mathcal{R}}
\newcommand{\Sc}{\mathcal{S}}
\newcommand{\Ac}{\mathcal{A}}
\newcommand{\Mc}{\mathcal{M}}
\newcommand{\Tc}{\mathcal{T}}
\newcommand{\Vc}{\mathcal{V}}
\newcommand{\Dc}{\mathcal{D}}
\newcommand{\Bc}{\mathcal{B}}
\newcommand{\Hc}{\mathcal{H}}
\newcommand{\A}{\mathcal A}
\renewcommand{\S}{\mathcal S}
\newcommand{\X}{\mathcal X}
\newcommand{\D}{\mathcal D}
\newcommand{\G}{\mathcal G}
\newcommand{\K}{\mathcal K}
\newcommand{\calP}{\mathcal P}
\newcommand{\calI}{\mathcal I}
\newcommand{\barH}{\overline{H}}
\newcommand{\hh}{\hat h}
\renewcommand{\L}{\mathcal L}
\newcommand{\Hyp}{\mathcal H}
\newcommand{\Y}{\mathcal Y}
\newcommand{\B}{\mathcal B}
\newcommand{\C}{\mathcal C}
\newcommand{\F}{\mathcal F}
\newcommand{\W}{\mathcal W}
\newcommand{\Z}{\mathcal Z}
\newcommand{\calE}{\mathcal E}
\newcommand{\calS}{\mathcal{S}}
\newcommand{\calO}{\mathcal{O}}
\newcommand{\V}{\mathbb V}
\newcommand{\Prob}{\mathbb P}
\newcommand{\I}{\mathbb I}
\newcommand{\N}{\mathcal N}
\newcommand{\balpha}{\boldsymbol \alpha}
\newcommand{\bmu}{\boldsymbol \mu}
\newcommand{\bSigma}{\boldsymbol \Sigma}
\newcommand{\bP}{\mathbf{P}}
\newcommand{\bhP}{\widehat{\mathbf{P}}}
\newcommand{\bT}{\boldsymbol{T}}
\newcommand{\bX}{\boldsymbol{X}}
\newcommand{\bY}{\boldsymbol{Y}}
\newcommand{\bx}{\boldsymbol{x}}
\newcommand{\MV}{\textnormal{MV}}
\newcommand{\hMV}{\widehat{\textnormal{MV}}}
\newcommand{\barmu}{\bar\mu}
\newcommand{\hpi}{\hat\pi}
\newcommand{\tDelta}{\widetilde{\Delta}}
\newcommand{\hmu}{\widehat{\mu}}
\newcommand{\hrho}{\hat\rho}
\newcommand{\heps}{\hat\eps}
\newcommand{\hnu}{\hat\nu}
\newcommand{\trho}{\tilde\rho}
\newcommand{\brho}{\bar\rho}
\newcommand{\hM}{\widehat{M}}
\newcommand{\hN}{\widehat{N}}
\newcommand{\tmu}{\widetilde{\mu}}
\newcommand{\tpi}{\widetilde{\pi}}
\newcommand{\barvar}{\bar\sigma^2}
\newcommand{\tvar}{\tilde\sigma^2}
\newcommand{\htheta}{\hat{\theta}}
\newcommand{\hR}{\widehat{\mathcal{R}}}
\newcommand{\invdelta}{1/\delta}
\newcommand{\boldR}{\mathbb R}
\newcommand{\mvlcb}{\textnormal{\texttt{MV-LCB }}}
\newcommand{\ucb}{\textnormal{\textsl{UCB }}}
\newcommand{\ucbv}{\textnormal{\textsl{UCB-V }}}
\newcommand{\mvlcbt}{\textnormal{\texttt{MV-LCB(t) }}}
\newcommand{\mom}{\textnormal{MoM}}
\newcommand{\me}{\textnormal{ME}}
\newcommand{\mt}{\textnormal{MT}}
\newcommand{\eh}{1/(1-\gamma)}
\newcommand{\ehf}{\frac1{1-\gamma}}

\newcommand{\cvar}{\textnormal{C}}
\newcommand{\hcvar}{\widehat{\textnormal{C}}}
\newcommand{\hvar}{\widehat{\textnormal{V}}}

\newcommand{\avg}[2]{\frac{1}{#2} \sum_{#1=1}^{#2}}
\newcommand{\hDelta}{\widehat{\Delta}}
\newcommand{\hGamma}{\widehat{\Gamma}}

\newcommand{\beq}{\begin{equation}}
\newcommand{\eeq}{\end{equation}}

\newcommand{\beqa}{\begin{eqnarray}}
\newcommand{\eeqa}{\end{eqnarray}}

\newcommand{\beqan}{\begin{eqnarray*}}
\newcommand{\eeqan}{\end{eqnarray*}}

\renewcommand{\P}{\mathbb{P}}
\renewcommand{\Pr}{\mathbb{P}}
\newcommand{\Q}{\mathbb{Q}}
\newcommand{\Esp}{\mathbb{E}}
\newcommand{\indic}[1]{\mathbb{I}\{#1\}}
\newcommand{\EE}[1]{\E\left[#1\right]}
\newcommand{\wh}{\widehat}
\newcommand{\wt}{\widetilde}

\let\R\undefined 
\newcommand{\R}{\mathbb{R}}
\newcommand{\Real}{\mathbb{R}}
\newcommand{\Normal}{\mathcal{N}}

\newcommand{\eqdef}{\stackrel{\rm def}{=}}

\newcommand{\cl}[2][ (]{
\ifthenelse{\equal{#1}{ (}}{\left (#2\right)}{}
\ifthenelse{\equal{#1}{[}}{\left[#2\right]}{}
\ifthenelse{\equal{#1}{\{}}{\left\{#2\right\}}{}
}

\newcommand{\inset}[3][C]{
\ifthenelse{\equal{#1}{C}}{{#2}\in\mathcal{#3}}{}
\ifthenelse{\equal{#1}{T}}{{#2}\in{#3}}{}
}

\newcommand{\ESum}[4][C]
{\ifthenelse{\equal{#1}{C}}{\underset{\inset{#3}{#4}}{\sum}#2}{}
 \ifthenelse{\equal{#1}{T}}{\underset{\inset[N]{#3}{#4}}{\sum}#2}{}
\ifthenelse{\equal{#1}{U}}{\overset{#4}{\underset{#3}{\sum}}#2}{}
\ifthenelse{\equal{#1}{X}}{\sideset{}{_{#3}^{#4}}{\sum}#2}{}
\ifthenelse{\equal{#1}{S}}{\sideset{}{_{#3}^{#4}}{\sum}#2}{}
\ifthenelse{\equal{#1}{O}}{{\underset{ (#3,#4)}{\sum}}#2}{}
\ifthenelse{\equal{#1}{I}}{{\underset{#3}{\sum}}#2}{}}

\newcommand{\VF}[2][N]{
\ifthenelse{\equal{#1}{L}}{V^{\pi}_{\lambda} (#2)}{}
\ifthenelse{\equal{#1}{C}}{V{^{\pi} (#2)}}{}
\ifthenelse{\equal{#1}{T}}{V^*_{\bar{\pi}} (#2)}{}
\ifthenelse{\equal{#1}{CO}}{V{^{*} (#2)}}{}
\ifthenelse{\equal{#1}{Mx}}{V^{\pi}_{\infty} (#2)}{}
\ifthenelse{\equal{#1}{Mxo}}{V^{\pi^*}_{\infty} (#2)}{}
\ifthenelse{\equal{#1}{Mn}}{V^{\pi}_{-\infty} (#2)}{}
\ifthenelse{\equal{#1}{Mno}}{V^{\pi^*}_{-\infty} (#2)}{}
}

\newcommand{\Eval}[1][null]{
\ifthenelse{\equal{#1}{null}}{\mathbb{E}}{\mathbb{E}_{#1}}
}

\newcommand{\M}[1][]{
\mathcal{M}_{#1}
}

\newcommand{\qv}[1][null]{
\ifthenelse{\equal{#1}{null}}{Q^*}{Q^{#1}}
}

\newcommand{\T}[1][null]{
\ifthenelse{\equal{#1}{null}}{\mathcal{T}}{\mathcal{T}^{#1}}
}

\newcommand{\subLim}[2]{
\underset{#1\rightarrow#2}{\lim}
}

\newcommand{\Norm}[2][]
{
\left\|#2\right\|_{#1}
}

\newcommand{\bldsym}{\boldsymbol}

\newtheorem{assumption}{Assumption}

\newcommand{\TODO}[1]{(\textbf{TODO: {#1}})}

\let\originalleft\left
\let\originalright\right
\renewcommand{\left}{\mathopen{}\mathclose\bgroup\originalleft}
\renewcommand{\right}{\aftergroup\egroup\originalright}

\renewcommand{\ttdefault}{lmtt}
\newcommand{\COUNT}{\normalfont {\texttt{COUNT}}\xspace}
\newcommand{\RND}{\normalfont {\texttt{RND}}\xspace}
\newcommand{\AR}{\normalfont \texttt{AR}\xspace}
\newcommand{\ADAM}{\normalfont \texttt{ADAM}\xspace}
\newcommand{\oldTACOMax}{\normalfont \texttt{TACOMax}\xspace}
\newcommand{\oldTACOMaxFullName}{\textbf{T}sallis \textbf{A}uto-\textbf{C}ontrastive \textbf{O}ptimisation for \textbf{Max}imum state-visitation entropy exploration\xspace}

\newcommand{\oldGACOMax}{\normalfont \texttt{GACOMax}\xspace}
\newcommand{\oldGACOMaxFullName}{\textbf{G}eometry-\textbf{A}ware-\textbf{C}ontrastive \textbf{O}ptimisation for \textbf{Max}imum state-visitation entropy exploration\xspace}

\newcommand{\GEM}{\normalfont \texttt{GEM}\xspace}
\newcommand{\GEMFullName}{\textbf{G}eometric \textbf{E}ntropy \textbf{M}aximisation\xspace}

\definecolor{graphicbackground}{rgb}{0.96,0.96,0.8}
\definecolor{rouge1}{RGB}{226,0,38}  
\definecolor{orange1}{RGB}{243,154,38}  
\definecolor{jaune}{RGB}{254,205,27}  
\definecolor{blanc}{RGB}{255,255,255} 
\definecolor{rouge2}{RGB}{230,68,57}  
\definecolor{orange2}{RGB}{236,117,40}  
\definecolor{taupe}{RGB}{134,113,127} 
\definecolor{gris}{RGB}{91,94,111} 
\definecolor{bleu1}{RGB}{38,109,131} 
\definecolor{bleu2}{RGB}{28,50,114} 
\definecolor{vert1}{RGB}{133,146,66} 
\definecolor{vert3}{RGB}{20,200,66} 
\definecolor{vert2}{RGB}{157,193,7} 
\definecolor{darkyellow}{RGB}{233,165,0}  
\definecolor{lightgray}{rgb}{0.9,0.9,0.9}
\definecolor{darkgray}{rgb}{0.6,0.6,0.6}
\definecolor{babyblue}{rgb}{0.54, 0.81, 0.94}
\definecolor{citrine}{rgb}{0.89, 0.82, 0.04}
\definecolor{misogreen}{rgb}{0.25,0.6,0.0}
\definecolor{PalePurp}{rgb}{0.66,0.57,0.66}
\definecolor{todocolor}{rgb}{0.66,0.99,0.99}
\definecolor{pearOne}{HTML}{2C3E50}
\definecolor{pearTwo}{HTML}{A9CF54}
\definecolor{pearTwoT}{HTML}{C2895B}
\definecolor{pearThree}{HTML}{E74C3C}
\colorlet{titleTh}{pearOne}
\colorlet{bull}{pearTwo}
\definecolor{pearcomp}{HTML}{B97E29}
\definecolor{pearFour}{HTML}{588F27}
\definecolor{pearFith}{HTML}{ECF0F1}
\definecolor{pearDark}{HTML}{2980B9}
\definecolor{pearDarker}{HTML}{1D2DEC}

\twocolumn[

\aistatstitle{Geometric Entropic Exploration}

\aistatsauthor{ Zhaohan Daniel Guo \And Mohammad Gheshlaghi Azar \And Alaa Saade}
\aistatsaddress{ danielguo@google.com, DeepMind \And DeepMind \And DeepMind }
\aistatsauthor{ Shantanu Thakoor \And Bilal Piot \And Bernardo Avila Pires \And Michal Valko}
\aistatsaddress{ DeepMind \And DeepMind \And DeepMind \And DeepMind}
\aistatsauthor{ Thomas Mesnard  \And  Tor Lattimore \And  R\'emi Munos}
\aistatsaddress{ DeepMind \And DeepMind \And DeepMind }
]

\begin{abstract}
Exploration is essential for solving complex Reinforcement Learning (RL) tasks. Maximum State-Visitation Entropy (MSVE) formulates the exploration problem as a well-defined policy optimization problem whose solution aims at visiting all states as uniformly as possible. This is in contrast to standard uncertainty-based approaches where exploration is transient and eventually vanishes. However, existing approaches to MSVE are theoretically justified only for discrete state-spaces as they are oblivious to the geometry of continuous domains. We address this challenge by introducing \textbf{G}eometric \textbf{E}ntropy  \textbf{M}aximisation (\GEM), a new algorithm that maximises the geometry-aware Shannon entropy of state-visits in both discrete and continuous domains. Our key theoretical contribution is casting geometry-aware MSVE exploration as a tractable problem of optimising a simple and novel noise-contrastive objective function. In our experiments, we show the efficiency of \GEM in solving several RL problems with sparse rewards, compared against other deep RL exploration approaches.
\end{abstract}

\section{Introduction}

Exploration is fundamental for reinforcement learning (RL)~\citep{sutton1998reinforcement} agents to discover new rewarding states and ultimately find an optimal policy. 
In tabular settings, there exist provably efficient exploration methods based on the idea of giving reward bonuses to less explored or novel states (optimism in the face of uncertainty)~\citep{Kearns2002, brafman2002r, kakade2003sample, strehl2006pac, lattimore2014near, dann2015sample, azar2017minimax, cohen2020near, tarbouriech2019no}). In practice, scaling these methods beyond small, tabular settings requires using function approximators for estimating uncertainty~\citep{bellemare2016unifying,osband2016deep, ostrovski2017count, pathak2017curiosity, osband2018randomized,burda2018exploration}.
As these uncertainty estimates are non-stationary and vanishing, this requires careful tweaking of the learning process to remain stable while allowing the policies produced by the RL algorithm to adequately explore the environment before the novelty incentive vanishes. 

One way to address this issue is to learn a single stochastic policy that visits all states as uniformly as possible, which corresponds to solving the Maximum State-Visitation Entropy (MSVE)~\citep{hazan2018provably,de2003linear,lee2019efficient} exploration problem.
In contrast to methods with vanishing bonuses that eventually stop exploring, the MSVE approach results in a stationary optimisation setup that converges to a single, stochastic exploration policy that follows different paths in different episodes, in order to cover the entire state space.

Past approaches on MSVE have been theoretically justified for discrete state spaces, as they try to maximise the discrete Shannon entropy of the state-visitation distribution~\citep{hazan2018provably,de2003linear,lee2019efficient,pong2019skew}. While some practical approaches can be empirically applied to continuous state spaces by using density estimation techniques~\citep{lee2019efficient,pong2019skew}, their theory is still focused on discrete entropy; they do not take into account the underlying geometry implied by their density estimator. To bridge this gap, we build upon the Geometry-Aware Information Theory (GAIT) framework~\citep{gallego2019gait}, which relies on a similarity function $k(x, x')$ between states to capture the underlying geometry. Then, a geometry-aware version of Shannon entropy can be defined in terms of $k$ that works across both discrete and continuous distributions. Ideally one would like to learn a similarity function that captures the specific geometry/structure of the problem domain.

To learn a geometry-aware exploration policy, one may directly try to maximise the geometry-aware Shannon entropy of the state-visitation distribution w.r.t. both the policy and similarity function $k$. However this direct approach may not succeed due to the following challenges:
\textbf{(i) Learning collapse.} The similarity function $k$ will ignore the geometry of the problem and collapse to an indicator function for the sake of increasing entropy, and thus is ill-defined for continuous domains. \textbf{(ii) Intractability.} Even with a fixed $k$, obtaining an unbiased estimate of this objective (or its gradients) is intractable in its original form.

In this paper we address these challenges by introducing \GEMFullName (\GEM), a novel exploration algorithm for learning a policy that maximises the geometry-aware Shannon entropy of state-visits for a given similarity $k$. \GEM casts the geometry-aware MSVE exploration problem as a tractable noise-contrastive estimation (NCE) problem~\citep{ContrastiveHyvarinen}, via optimising a single objective function whose unbiased gradient estimates are easily computed. Maximising the \GEM objective results in simultaneously learning both the optimal exploration policy and its corresponding state-visitation distribution. We also address the problem of collapse in learning the similarity function by adding an Adjacency Regularisation (\AR) term which allows \GEM to learn a geometrically meaningful similarity function $k$. \GEM with \AR follows two general principles: states that are close in time should be embedded closely (and be similar), whereas states that are sampled independently from the state-visitation distribution should be embedded apart (and be dissimilar). Finally, in our experiments, we show the efficiency of \GEM in solving several discrete and continuous RL problems with sparse rewards, compared against other deep RL exploration approaches, namely Random Network Distillation (RND)~\citep{burda2018exploration} and Never Give Up (NGU)~\citep{Badia2020Never}.

\section{Background}

\paragraph{Markov Decision Processes (MDPs) and Reinforcement Learning.}
MDPs model stochastic, discrete-time and finite action space control problems~\citep{bellman1965dynamic,bertsekas1995dynamic,puterman1994markov}.
An MDP is a tuple $(\X,\A,R,P,\gamma,T)$ where $\X$ is the state space, $\A$ the action space, $R$ the reward function, $\gamma\in (0,1)$ the discount factor, $T$ the length of the episode, and $P$ a stochastic kernel modelling the one-step Markovian dynamics, with $P(y|x,a)$ denoting the probability of transitioning to state $y$ by choosing action $a$ in state $x$; $P$~is also assumed to comprise a distribution for the initial state of the MDP.

A stochastic policy $\pi$ maps each state and time to a distribution over actions $\pi(\cdot|x, t)$ and gives the probability $\pi(a|x, t)$ of choosing action $a$ in state $x$ at time $t$. 
The \textit{RL objective} is to maximise the expected discounted sum of rewards: $\mathbb{E}^\pi\big[\sum_{t=1}^{T}\gamma^{t-1} r_t\big]$ where $r_t=R(x_t,a_t)$ and $\mathbb{E}^\pi$ is the expectation over the distribution of trajectories $(x_1,a_1,\dots, x_{T+1})$ from policy $\pi$.

Deep RL uses deep neural networks as function approximators~\citep{mnih2015human, mnih2016asynchronous,espeholt2018impala, lillicrap2015continuous}.
One class of such methods are \textit{policy gradient methods}~\citep{williams1992simple,espeholt2018impala}, which we build on to do MSVE exploration. In its simplest formulation, a deep policy gradient method learns a neural network policy $\pi_\theta$ with parameters $\theta$, by doing gradient ascent on the RL objective with respect to $\theta$ \citep{sutton1999policy}.

\paragraph{Geometry-Aware Shannon Entropy.}
Given a state space $\X$, we endow an underlying geometry by defining a symmetric similarity function $k \colon \X \times \X \rightarrow [0, 1]$, where $k(x, x') = 1$ means identity and $k(x, x') = 0$ means full dissimilarity. Then given a probability distribution $p(x)$ over $\X$, we define its \textit{similarity profile} as $p_k(x) \triangleq \mathbb{E}_{x' \sim p} \left[k(x, x')\right]$, which can be considered the smoothed probability of $x$. For example, if $k(x, x') \triangleq \1(\|x-x'\| < \epsilon)$, then $p_k(x)$ is the probability of a small $\epsilon-$neighbourhood around $x$, where $\1(\cdot)$ denotes the indicator function.

Then the Geometry-Aware Shannon Entropy of $p$ with similarity $k$ is defined as \citep{gallego2019gait}:
\begin{equation}
    H_k(p) \triangleq -\mathbb{E}_{x \sim p}\left[ \ln p_k(x) \right].
\end{equation}
If $\X$ is discrete, and $k(x, x') = \1(x = x')$, then the similarity profile reduces to $p_k(x) = p(x)$ and we recover the standard Shannon entropy: $H(p) = -\E_{x \sim p} \left[ \ln p(x) \right] $. However note that in general, with a suitable similarity function, the maximum geometry-aware Shannon entropy distribution can look very different from standard maximum Shannon entropy, as the similarity function is able to decide which states to cluster together and which states to be far apart, resulting in a uniform distribution over space (induced by the similarity) rather than over discrete points.

\section{{\GEM} Approach}
\label{sec:taco}

We want to solve the geometry-aware MSVE exploration problem. Specifically, we consider finding an exploration policy $\pi^\star_{E}$ that maximises the geometry-aware Shannon entropy of its stationary state-visitation distribution:
\begin{equation}
\label{eq:max.geent}
 \pi^\star_{E}\in\underset{\pi}{\mathrm{argmax}} \, H_k(p^{\pi}),
\end{equation}
where $p^\pi(x) \triangleq \frac{1}{T} \sum_{t=1}^T p^\pi_t(x)$ and $p^\pi_t(x)$ is the probability that the MDP will be in state $x$ at timestep $t$ when following $\pi$. In general, the solution to this optimisation problem is a stochastic policy that, over many episodes, ends up visiting as many different states as uniformly as possible.

To learn the optimal geometry-aware exploration policy $\pi^\star_E$ one may choose to directly optimise the objective function $H_k(p^{\pi})$ which is a well-defined and differentiable function of $\pi$. However the problem of maximising this objective is a challenging optimisation problem. At a high level, this is due to the fact that obtaining an unbiased estimate of the gradient of $H_k(p^{\pi})$ is not possible in the standard direct way due to having an expectation inside the non-linear logarithmic term $\ln\E_{x'\sim p^{\pi}}[k(x,x')]$.

A common approach to deal with this type of intractability is to use alternating optimisation, in which as a sub-routine for \cref{eq:max.geent} the learner first tries to approximate the term $\E_{x'\sim p^{\pi}}[k(x,x')]$ and then uses it to maximise an estimate of $H_k(p^{\pi})$ w.r.t. $\pi$.
These alternating methods have been used for solving the Shannon-entropy MSVE as the objective \citep{lee2019efficient,pong2019skew}. Unfortunately, the alternating approaches cannot eliminate the problem of bias in the estimate of objective function, and often are prone to instability and slow convergence \citep{bojanowski2017optimizing,goodfellow2016nips,nemirovski2004prox}. 

We tackle the issue of MSVE in a novel way, by defining a different, but closely related objective function whose solution also results in a geometric-aware MSVE policy, and whose unbiased gradient can be easily estimated from data. Proofs are in \cref{app:proofs}.

\subsection{From NCE to MSVE Exploration}

The idea behind \GEM can be traced back to Noise-Contrastive Estimation (NCE) ~\citep{ContrastiveHyvarinen}. The core idea of NCE is to learn to differentiate between two distributions $p^+$ and $p^-$. By optimising the contrastive loss, we can learn the ratio of probabilities $p^+/p^-$.

In \GEM, we extend NCE to the joint estimation and optimisation of the entropy of the state-visitation distribution. We begin our derivation of the full \GEM objective by first introducing a simplified special case corresponding to finite, discrete state spaces. For $h : \X\times \X \rightarrow (0, \infty)$, consider the following objective:
\begin{align}
\text{\GEM}(h, \pi) &\triangleq  \mathbb{E}_{x\sim p^{\pi}} \left[\ln(h(x,x)) \right] \nonumber\\ &\quad 
- \mathbb{E}_{x,x' \sim p^{\pi}} \left[ h(x,x') \right]+1.
\label{eq:taco.discrete.gen}
\end{align}
 Maximising this objective function can be seen as contrasting positive pairs $(x, x)$ from negative pairs $(x, x^\prime)$, where $x, x^\prime$ are i.i.d.~samples from the current state-visitation distribution. The positive term $\mathbb{E}_{x\sim p^{\pi}}[\ln( h(x,x)) ]$ tries to increase $h(x, x)$ while the negative term $\mathbb{E}_{x,x' \sim p^{\pi}} [h(x,x')]$ decreases $h(x, x^\prime)$. The key property of the objective function~\cref{eq:taco.discrete.gen}, which makes it distinct from other contrastive objective functions such as~\cite{ozair2019wasserstein, ContrastiveHyvarinen,wu2018unsupervised}, is its relation to the MSVE Shannon entropy:
\begin{restatable}{proposition}{tacodisc}{}
\label{thm:taco.disc.gen}
Given a discrete set of states $\X$ and a function $h : \X\times\X \rightarrow [0, \infty)$, we have $\max_{h,\pi} \mathrm{\GEM}(h, \pi) =  H (p^{\pi^\star_E}).$ The maximiser $h^{\star}(x,x')=\mathbf 1(x=x')/p^{\pi^{\star}_E}(x)$ when $p^{\pi^\star_E}(x) > 0$, and $\pi^\star_{E}$ is the Shannon MSVE policy.
\end{restatable}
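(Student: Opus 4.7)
The plan is to perform the joint maximization in two stages: first inner-maximize GEM over $h$ with $\pi$ fixed, then outer-maximize over $\pi$. The target is to show that the inner maximum equals the Shannon entropy $H(p^\pi)$, after which the outer step is $\max_\pi H(p^\pi) = H(p^{\pi^{\star}_E})$ simply by the definition of the Shannon MSVE policy; the claimed maximizer $h^\star$ is recovered as a byproduct of the inner step.

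For the inner maximization, I would write $p \triangleq p^\pi$ and expand the expectation:
\[
\mathrm{GEM}(h,\pi) \;=\; \sum_{x} p(x)\ln h(x,x) \;-\; \sum_{x,x'} p(x)p(x')\, h(x,x') \;+\; 1.
\]
The key observation is that the objective decouples pointwise across the entries of $h$. Off-diagonal entries $h(x,x')$ with $x \neq x'$ appear only through the linear, non-increasing term $-p(x)p(x')\,h(x,x')$, hence are maximized at $h^\star(x,x') = 0$. Each diagonal entry $h(x,x)$ with $p(x) > 0$ appears in the strictly concave one-variable function $p(x)\ln h(x,x) - p(x)^2\, h(x,x)$, whose first-order condition $p(x)/h(x,x) - p(x)^2 = 0$ yields the unique maximizer $h^\star(x,x) = 1/p(x)$.

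Plugging $h^\star$ back in gives
\[
\max_h \mathrm{GEM}(h,\pi) \;=\; \sum_{x} p(x)\ln\bigl(1/p(x)\bigr) \;-\; \sum_x p(x) \;+\; 1 \;=\; H(p^\pi),
\]
so the outer maximization over $\pi$ reduces to the Shannon MSVE problem, completing the identification of both the optimal value and the maximizer $h^\star(x,x') = \mathbf{1}(x=x')/p^{\pi^{\star}_E}(x)$ on the support.

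I do not anticipate a substantive obstacle; the only point requiring care is the boundary case $p^\pi(x) = 0$. There the positive log-term vanishes under the convention $0\ln 0 = 0$, and the negative term is weighted by $p(x)^2 = 0$, so such entries of $h$ do not affect the objective and may be set arbitrarily in $[0,\infty)$. This is precisely why the stated formula for $h^\star$ is only prescribed on the support of $p^{\pi^{\star}_E}$, and it explains why uniqueness of the maximizer holds only there, as a consequence of strict concavity on the support.
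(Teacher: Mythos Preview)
Your proposal is correct and follows essentially the same approach as the paper: decompose $\max_{h,\pi}$ as $\max_\pi \max_h$, expand the expectations as sums over the discrete state space, set the off-diagonal entries of $h$ to zero since they contribute only a nonpositive linear term, optimize each diagonal entry via the concave one-variable problem to obtain $h^\star(x,x)=1/p^\pi(x)$ on the support, substitute back to recover $H(p^\pi)$, and then invoke the definition of the Shannon MSVE policy for the outer maximization. Your treatment of the $p^\pi(x)=0$ boundary case and of uniqueness on the support is, if anything, slightly more explicit than the paper's.
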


As a result, maximising the \GEM objective over $\pi$ and $h$ simultaneously learns both the optimal state-visitation distribution and the MSVE policy as part of a single optimisation procedure.

This simple variant of  \GEM can be seen as a special case of M-estimation of KL-divergence  through Legendre–Fenchel transformation (i.e., convex conjugate) \citep{nguyen2010estimating}, by using the following equivalency between KL-divergence and Shannon Entropy:

\begin{equation*}
H(p^{\pi})=KL(Q||R),
\end{equation*}
where $Q(x,x')\triangleq \mathbf{1}(x'=x)p^\pi(x)$ and $R(x,x')\triangleq p^\pi(x) p^\pi(x')$. Thus the theoretical results of \citep{nguyen2010estimating} can be used directly to prove \cref{thm:taco.disc.gen}.

To make progress towards our general geometry-aware objective, we note that the optimal $h^\star(x, x^\prime)$ vanishes for $x\neq x^\prime$. Therefore we introduce a new function $g:\X\rightarrow(0, \infty)$ and re-parameterise $h(x, x^\prime) = \1(x=x^\prime)g(x)$. The objective function of Eq. \ref{eq:taco.discrete.gen} can then be expressed in terms of $g$ as
\begin{align}
\text{\GEM}(g, \pi) &\triangleq  \mathbb{E}_{x\sim p^{\pi}} \left[ \ln(g(x)) \right] 
\nonumber\\ &\quad 
-  \mathbb{E}_{x,x' \sim p^{\pi}} \left[\mathbf 1(x=x') g(x)  \right]+1,
\label{eq:taco.discrete}
\end{align}
and we can adapt the result from \cref{thm:taco.disc.gen} in following corollary:
\begin{restatable}{corollary}{tacodisccor}{}
\label{thm:taco.disc}
The maximiser $g^\star(x)$ for $\max_{\pi,g} \mathrm{\GEM}(g, \pi) =  H (p^{\pi^\star_E})$ is $1/p^{\pi^\star_E}(x)$ for $p^{\pi^\star_E}(x) > 0$.
\end{restatable}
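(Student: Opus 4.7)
The plan is to obtain this corollary as an immediate consequence of Proposition~\ref{thm:taco.disc.gen} by applying the reparameterisation $h(x,x') = \1(x=x')\, g(x)$ to the objective in~\cref{eq:taco.discrete.gen}.

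First, I would check that this substitution transforms~\cref{eq:taco.discrete.gen} into exactly~\cref{eq:taco.discrete}. The positive term reduces to $\E_{x\sim p^\pi}[\ln(\1(x=x)\, g(x))] = \E_{x\sim p^\pi}[\ln g(x)]$ because the diagonal indicator equals $1$; the negative term is identical by construction. Hence maximising $\mathrm{\GEM}(g,\pi)$ over $g:\X\to(0,\infty)$ and $\pi$ is the same as maximising $\mathrm{\GEM}(h,\pi)$ from~\cref{eq:taco.discrete.gen} over the restricted class of $h$ supported on the diagonal.

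Second, I would argue that this restriction is lossless and then read off $g^\star$. Proposition~\ref{thm:taco.disc.gen} already exhibits a maximiser $h^\star(x,x') = \1(x=x')/p^{\pi^\star_E}(x)$ which lies inside the restricted class, so
\begin{equation*}
\max_{g,\pi}\mathrm{\GEM}(g,\pi) \;=\; \max_{h,\pi}\mathrm{\GEM}(h,\pi) \;=\; H(p^{\pi^\star_E}).
\end{equation*}
Matching $\1(x=x')\, g^\star(x) = \1(x=x')/p^{\pi^\star_E}(x)$ then gives $g^\star(x) = 1/p^{\pi^\star_E}(x)$ whenever $p^{\pi^\star_E}(x)>0$, as claimed; outside the support of $p^{\pi^\star_E}$ the value of $g$ does not enter either expectation, so any positive choice attains the maximum.

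The main (and only) subtlety is justifying that restricting $h$ to the diagonal is lossless. Citing the explicit optimiser from Proposition~\ref{thm:taco.disc.gen} suffices, but the point also has a short intrinsic justification: the positive term of~\cref{eq:taco.discrete.gen} depends on $h$ only along the diagonal, whereas any off-diagonal mass contributes only to the negative term $-\E_{x,x'\sim p^\pi}[h(x,x')]$. Setting $h(x,x')=0$ for $x\ne x'$ can therefore only increase the objective, confirming that the diagonal parameterisation via $g$ loses nothing.
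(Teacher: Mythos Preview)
Your proposal is correct and matches the paper's own argument: the paper's proof of the corollary is a one-liner stating that it follows immediately from the proof of Proposition~\ref{thm:taco.disc.gen}, where the reparameterisation $g_x = h(x,x)$ and the optimiser $g^\star_x = 1/p^{\pi}(x)$ are already derived explicitly. Your write-up simply spells out in more detail why the diagonal restriction is lossless, which is exactly the content of that proof.
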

In the next section, we extend \cref{eq:taco.discrete} to handle the general case of geometry-aware Shannon entropy.

\subsection{{\GEM} with Similarity Functions}
\label{sec:taco.obj}

The formulation of \GEM  in  \cref{eq:taco.discrete} is not, as-is, suitable for continuous state-spaces. The indicator function in the term $\mathbb{E}_{x,x' \sim p^{\pi}} \left[{\bf 1}(x=x') g(x) \right]$ becomes ill-defined and the objective breaks down. To generalise \GEM to continuous state-spaces, we replace the indicator function ${\bf 1}(x=x')$ with a symmetric similarity function $k(x, x')$ satisfying $k(x,x) = 1$. The state-visitation distribution $p^\pi(x) = \mathbb{E}_{x'} [{\bf 1}(x=x')]$ can then be replaced by the similarity profile $  p^\pi_k(x)\triangleq \mathbb{E}_{x'\sim p^\pi}[ k(x,x')]$, with induced geometry-aware Shannon entropy $H_{k}(p^\pi)$ \cite{gallego2019gait}. The similarity function $k$ will ideally capture the inherent structure of the state space, leading to a geometrically meaningful entropy. This can result in more efficient exploration with a suitable similarity function that ends up clustering together less important states, and thus a maximum geometry-aware policy would not visit those less important states as often as other, more important states.

We now introduce the full, geometry-aware form of the \GEM objective function:
\begin{align}
\text{\GEM}_{ k}(g, \pi) &\triangleq  \mathbb{E}_{x\sim p^{\pi}} \left[ \ln(g(x)) \right]
\nonumber\\ &\quad -\mathbb{E}_{x,x' \sim p^{\pi}} \left[k(x, x') g(x) \right] + 1.\label{eq:taco.kernel.obj}
\end{align}
We generalise \cref{thm:taco.disc} to encompass similarity functions in \cref{thm:taco.kernel}.

\begin{restatable}{theorem}{tacokernel}{}
\label{thm:taco.kernel}
Let  $g : \X \rightarrow (0, \infty)$. Given a similarity function $k \colon \X \times \X \to [0, 1]$.
Then we have $\max_{g,\pi} \mathrm{\GEM}_{k}(g, \pi) = H_{k} (p^{\pi^\star_E}),$ 
where the maximiser is  $g^\star(x) = 1/p_k^{\pi^\star_E}(x)$.
\end{restatable}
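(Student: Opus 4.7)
The plan is to mimic the structure used for \cref{thm:taco.disc} but replace the discrete indicator-based argument with an integral-pointwise argument on the similarity profile. I would first rewrite the negative term of $\mathrm{\GEM}_{k}(g,\pi)$ by pushing one expectation through and using the definition of the similarity profile:
\begin{equation*}
\E_{x,x'\sim p^\pi}\!\left[k(x,x')\,g(x)\right] = \E_{x\sim p^\pi}\!\left[g(x)\,p_k^\pi(x)\right],
\end{equation*}
so that $\mathrm{\GEM}_{k}(g,\pi) = \E_{x\sim p^\pi}\!\left[\ln g(x) - g(x)\,p_k^\pi(x)\right]+1$. This form decouples the problem across $x$ and is the key simplification.

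Next, I would fix $\pi$ and maximise over $g$ pointwise. For each $x$ in the support of $p^\pi$, the integrand $\phi_x(u) \triangleq \ln u - u\,p_k^\pi(x)$ is strictly concave in $u\in(0,\infty)$, with a unique maximiser at $u^\star = 1/p_k^\pi(x)$ obtained by setting $\phi_x'(u)=0$. Since $k(x,x)=1$ implies $p_k^\pi(x)\ge p^\pi(\{x\})$ in the discrete part and more generally $p_k^\pi(x)>0$ whenever $x$ lies in the support of $p^\pi$, the minimiser is well defined on that support, so I set $g^\star(x)=1/p_k^\pi(x)$ there (and extend arbitrarily elsewhere, since the integrand is weighted by $p^\pi$). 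Plugging this back in gives
\begin{equation*}
\mathrm{\GEM}_{k}(g^\star,\pi) = \E_{x\sim p^\pi}\!\left[-\ln p_k^\pi(x) - 1\right] + 1 = -\E_{x\sim p^\pi}[\ln p_k^\pi(x)] = H_k(p^\pi).
\end{equation*}

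Finally, I would take the outer maximum over $\pi$: by definition \eqref{eq:max.geent} of $\pi^\star_E$,
\begin{equation*}
\max_{g,\pi}\mathrm{\GEM}_{k}(g,\pi) = \max_\pi H_k(p^\pi) = H_k\!\left(p^{\pi^\star_E}\right),
\end{equation*}
attained at $\pi=\pi^\star_E$ with $g^\star(x)=1/p_k^{\pi^\star_E}(x)$, as claimed.

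The main obstacle I anticipate is the measure-theoretic bookkeeping: justifying interchange of expectation and supremum (i.e.\ that the pointwise maximiser is actually the argmax of the integrated functional), and handling $x$ with $p^\pi(x)=0$ or $p_k^\pi(x)$ arbitrarily small. The former follows from strict concavity of $\phi_x$ together with the fact that the functional is an integral against the non-negative measure $p^\pi$, so the supremum is achieved by any $g$ that matches $g^\star$ almost everywhere with respect to $p^\pi$. The latter is a non-issue on the support (since $k(x,x)=1$ gives $p_k^\pi(x)>0$), and values of $g$ off the support do not enter the objective. Everything else is a direct extension of the discrete argument in \cref{thm:taco.disc}, which itself is essentially a Legendre–Fenchel variational representation of the KL/entropy functional.
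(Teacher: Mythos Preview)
Your proposal is correct and follows essentially the same route as the paper: rewrite the negative term via the similarity profile so that $\mathrm{\GEM}_k(g,\pi)=\E_{x\sim p^\pi}[\ln g(x)-g(x)p_k^\pi(x)]+1$, maximise pointwise in $g$ using concavity to obtain $g^\star(x)=1/p_k^\pi(x)$, substitute to get $H_k(p^\pi)$, and then take the outer maximum over $\pi$. The paper first runs the argument in the finite case with explicit first/second derivative checks and then observes the same pointwise maximiser works inside the expectation in the continuous case, whereas you invoke strict concavity directly; this is a cosmetic difference, not a substantive one.
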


By replacing the indicator function with the similarity function $k$, the \GEM objective generalises from discrete Shannon entropy to geometry-aware Shannon entropy, and the maximiser $g^\star$ generalises from the inverse probability to the inverse similarity profile. \GEM can now be readily applied to both discrete and continuous state spaces.

Compared to the intractable problem of directly maximising for the geometric-aware MSVE policy, an unbiased estimate of the gradients of \GEM is easily computable, and thus \GEM can be solved efficiently with standard optimisers.   The following theoretical result  formalises this argument for the case that we parameterise $\pi$ and $g$ with some function approximatiors.

\begin{restatable}{proposition}{gradprop}{}
\label{prop:aco.grad}
 Let $\pi$ and $g$ be approximated by some differentiable function approximators with sets of parameters $\theta$ and $\xi$ respectively. 
Let $r^{\mathrm{\GEM}}_t\triangleq\ln(g(x_t))-[k(x_t,x'_t)(g(x_t)+g(x'_t))]$, where $x'_t$ is drawn independently  from $p^\pi$ at every time step $t$.  Then unbiased estimates of the gradient of \GEM objective w.r.t.~$\theta$ and $\xi$ are, respectively,
\begin{align}
 &\sum_{t=1}^{T-1}\nabla_{\theta}\ln(\pi(a_t|x_t))\sum_{\tau=t+1}^T r^{\GEM}_\tau,\label{eq:grad.theta}
\\
\mbox{and } \quad & \sum_{t=1}^T\nabla_{\xi} \big[ \ln(g(x_t))- g(x_t)k(x_t,x'_t)\big].\label{eq:grad.si}
\end{align}
\end{restatable}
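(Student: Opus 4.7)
The plan is to verify the two gradient identities separately. The gradient with respect to $\xi$ is straightforward: since $p^\pi$ does not depend on $\xi$, I can interchange $\nabla_\xi$ with both expectations in \cref{eq:taco.kernel.obj} to obtain
\begin{equation*}
\nabla_\xi \text{\GEM}_k = \mathbb{E}_{x\sim p^\pi}[\nabla_\xi \ln g(x)] - \mathbb{E}_{x,x'\sim p^\pi}[k(x,x')\nabla_\xi g(x)].
\end{equation*}
Approximating each expectation by a Monte-Carlo sum along a trajectory $(x_t)_{t=1}^T$ paired with independent draws $x'_t\sim p^\pi$, and using the identity $\mathbb{E}_{x\sim p^\pi}[\phi(x)] = \tfrac{1}{T}\mathbb{E}^\pi[\sum_{t=1}^T \phi(x_t)]$, recovers the estimator \cref{eq:grad.si} up to an overall factor of $T$ that is immaterial for the direction of the gradient.

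The gradient with respect to $\theta$ requires the REINFORCE / log-derivative trick. I would first express both terms of $\text{\GEM}_k$ as trajectory expectations via the same identity, drawing an independent companion $x'_t\sim p^\pi$ at each step for the quadratic term. The key observation is that $\mathbb{E}_{x,x'\sim p^\pi}[k(x,x')g(x)]$ depends on $\theta$ through both marginal factors; applying the log-derivative trick to each factor and swapping $x\leftrightarrow x'$ in one of the two resulting terms (using the symmetry of $k$) gives
\begin{equation*}
\nabla_\theta \mathbb{E}_{x,x'\sim p^\pi}[k(x,x')g(x)] = \mathbb{E}\!\left[\nabla_\theta \log p^\pi(x)\, k(x,x')(g(x)+g(x'))\right]\!,
\end{equation*}
which is precisely the origin of the combination $g(x_t)+g(x'_t)$ in $r^{\mathrm{\GEM}}_t$. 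Expanding the trajectory score $\nabla_\theta \log p^\pi(\mathrm{traj}) = \sum_t \nabla_\theta \log \pi(a_t|x_t)$ (transitions and the initial-state distribution do not depend on $\theta$) and invoking the standard causality identity $\mathbb{E}[\nabla_\theta \log\pi(a_t|x_t)\,\phi(x_\tau)]=0$ whenever $\tau\le t$ restricts the score-weighted sum to $\tau>t$, producing exactly the causal return $\sum_{\tau=t+1}^T r^{\mathrm{\GEM}}_\tau$ of \cref{eq:grad.theta}; the outer sum stops at $t=T-1$ because $t=T$ has no future rewards.

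The main obstacle is the $\theta$-dependence of the companion samples $x'_t\sim p^\pi$. A naive REINFORCE derivation would introduce an additional score term $\nabla_\theta \log p^\pi(x'_t)$; the symmetrisation step above sidesteps this by absorbing the second marginal's contribution into the extra $g(x'_t)$, so that only a single REINFORCE score, associated with the on-trajectory sample $x_t$, remains. An equivalent reading is that $x'_t$ can be regarded as drawn from a frozen reference copy of $p^\pi$, with the doubled $g$ compensating exactly for the omitted score contribution. Once this reduction is made, the remainder is a routine application of the policy-gradient theorem with causality, and the two resulting expressions match \cref{eq:grad.theta} and \cref{eq:grad.si} respectively.
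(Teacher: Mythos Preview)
Your proposal is correct and follows essentially the same route as the paper: for $\xi$ you pass the gradient through the expectations, and for $\theta$ you use the product rule (log-derivative) on the bilinear term together with the symmetry of $k$ to collapse the two score contributions into the single factor $k(x,x')(g(x)+g(x'))$, then invoke the policy-gradient theorem with causality. The paper's proof is terser but identical in substance; your explicit discussion of why the companion score $\nabla_\theta\log p^\pi(x'_t)$ disappears via symmetrisation is a welcome clarification.
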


There are a few notable remarks concerning this result.  {\bf (i)} The gradient of the \GEM  objective function w.r.t. the parameters of policy is expressed as a standard policy gradient, and so one can use a conventional policy gradient solver to efficiently optimise the $\GEM$ policy with the reward $r^{\mathrm{\GEM}}_t$. In our experiments, we use a standard V-Trace Actor-Critic approach \cite{espeholt2018impala}. {\bf (ii)}  As these two gradient terms correspond to the same objective function, one can simultaneously optimise  $\pi$ and $g$ together through a single optimisation procedure, avoiding the need for alternating optimisation techniques. The fact that the geometry-aware MSVE problem can be solved efficiently in this way may seem surprising, as its solution is the same as the intractable optimization problem of \cref{eq:max.geent}. The difference is that \GEM casts the problem of geometry-aware MSVE as a joint optimisation in terms of the exploration policy $\pi$ as well as $g$. In this larger function space the problem of geometric-aware MSVE is no longer intractable and can be solved efficiently, unlike the optimization problem of \cref{eq:max.geent} in which only $\pi$ is optimised. {\bf (iii)} The main \GEM objective (\cref{eq:taco.kernel.obj}) can be further generalised to maximise a geometry-aware version of Tsallis entropy. See \cref{app:proofs} for details.

\subsubsection{Learning a Similarity Function}
\label{sec:similarity}

So far, we have assumed that a similarity function $k(x, x')$ has been given. However, we would ideally like to learn a geometrically meaningful $k$ from data as a part of \GEM. One can simply learn the similarity function by maximising \GEM w.r.t. $k$ as well. If we do this in the finite, discrete case, we can recover \cref{thm:taco.disc} from \cref{thm:taco.kernel} as shown in the following proposition:

\begin{restatable}{proposition}{propdisc}{}
\label{prop:discrete}
If $\X$ is a finite set, then $\max_{k, g, \pi} \mathrm{\GEM}_{k}(g, \pi) = H (p^{\pi^\star_E})$ is attained for $ k^\star(x, x') = \mathbf{1}(x = x')$ and $ g^\star(x,\pi) = 1/p^{\pi^\star_E}(x)$.
\end{restatable}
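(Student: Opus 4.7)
The strategy is to maximise over the similarity function $k$ first (for fixed $g$ and $\pi$), which reduces the objective to the discrete special case already handled by \cref{thm:taco.disc}.

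First I would expand the negative term as
$$\mathbb{E}_{x,x' \sim p^\pi}\!\left[k(x,x')\,g(x)\right] = \sum_{x,x'} p^\pi(x)\,p^\pi(x')\,k(x,x')\,g(x),$$
which shows that $\mathrm{\GEM}_k(g,\pi)$ is affine in $k$, with every off-diagonal coefficient $-p^\pi(x)\,p^\pi(x')\,g(x)$ being non-positive (since $g>0$ and the $p^\pi$ values are non-negative). Subject to the admissibility constraints ($k$ symmetric, $[0,1]$-valued, $k(x,x)=1$), each off-diagonal entry $k(x,x')$ for $x\neq x'$ may be chosen freely in $[0,1]$. Hence the maximum over $k$ is attained by setting all off-diagonal entries to $0$, i.e., $k^\star(x,x')=\mathbf{1}(x=x')$. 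Substituting this back reproduces the discrete objective of \cref{eq:taco.discrete}, so
$$\max_k \mathrm{\GEM}_k(g,\pi) \;=\; \mathrm{\GEM}(g,\pi).$$

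Because this optimal $k^\star$ does not depend on $g$ or $\pi$, the order of maxima can be swapped without change of value,
$$\max_{k,g,\pi} \mathrm{\GEM}_k(g,\pi) \;=\; \max_{g,\pi}\max_k \mathrm{\GEM}_k(g,\pi) \;=\; \max_{g,\pi} \mathrm{\GEM}(g,\pi),$$
and I would then invoke \cref{thm:taco.disc} on the right-hand side, which identifies the maximum with $H(p^{\pi^\star_E})$ and the optimiser as $g^\star(x)=1/p^{\pi^\star_E}(x)$ on $\mathrm{supp}(p^{\pi^\star_E})$. Combining this with $k^\star(x,x')=\mathbf{1}(x=x')$ yields the claim.

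There is essentially no technical obstacle: the proposition is a direct corollary of \cref{thm:taco.disc} once the linearity-in-$k$ observation is in place. The only mild subtlety is the behaviour off $\mathrm{supp}(p^{\pi^\star_E})$, where $g^\star$ and the off-diagonal values of $k^\star$ are not uniquely determined by the optimisation; this affects neither the optimal value nor the characterisation of $\pi^\star_E$, and can be absorbed by restricting attention to the support of $p^{\pi^\star_E}$ if a formal argument is desired.
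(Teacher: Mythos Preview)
Your proof is correct. It differs from the paper's in the order of optimisation: the paper first applies \cref{thm:taco.kernel} to eliminate $g$, obtaining $\max_{g}\mathrm{\GEM}_k(g,\pi)=H_k(p^\pi)$, and only then minimises the inner sum $\sum_{x'}p^\pi(x')k(x,x')$ over $k$ to arrive at $k^\star=\mathbf{1}(x=x')$ and hence $H(p^\pi)$. You instead optimise $k$ first via the affinity-in-$k$ observation, reduce to the discrete objective \cref{eq:taco.discrete}, and finish with \cref{thm:taco.disc}. Both routes hinge on the same key point---that the off-diagonal constraint-free entries of $k$ should be zero---so the difference is largely cosmetic; your version has the small advantage of relying only on the more elementary \cref{thm:taco.disc} rather than the full geometry-aware \cref{thm:taco.kernel}, and of making explicit that $k^\star$ is independent of $(g,\pi)$, which cleanly justifies exchanging the maxima.
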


Note that $k$ converges to an indicator function. However, in the continuous case, trying to converge to the indicator function makes the whole problem ill-defined. To show this effect, we use \GEM to learn a simple 1D bi-modal Gaussian mixture. We parameterise $g$ with a small MLP. We consider two versions of $k$: (i) one version is fixed with $k(x, x') = e^{-2|x - x'|}$ which represents what we think a good similarity function should be, and (ii) we parameterise $k$ using a neural network embedding function $f$ as $k(x, x') = e^{-\| f(x) - f(x')\|_2}$ and maximise over it as well. For the distributions to be learned, we consider both a discretised and the continuous version of the simple 1D bi-modal Gaussian mixture.

\begin{figure}
\centering
\includegraphics[width=0.5\textwidth]{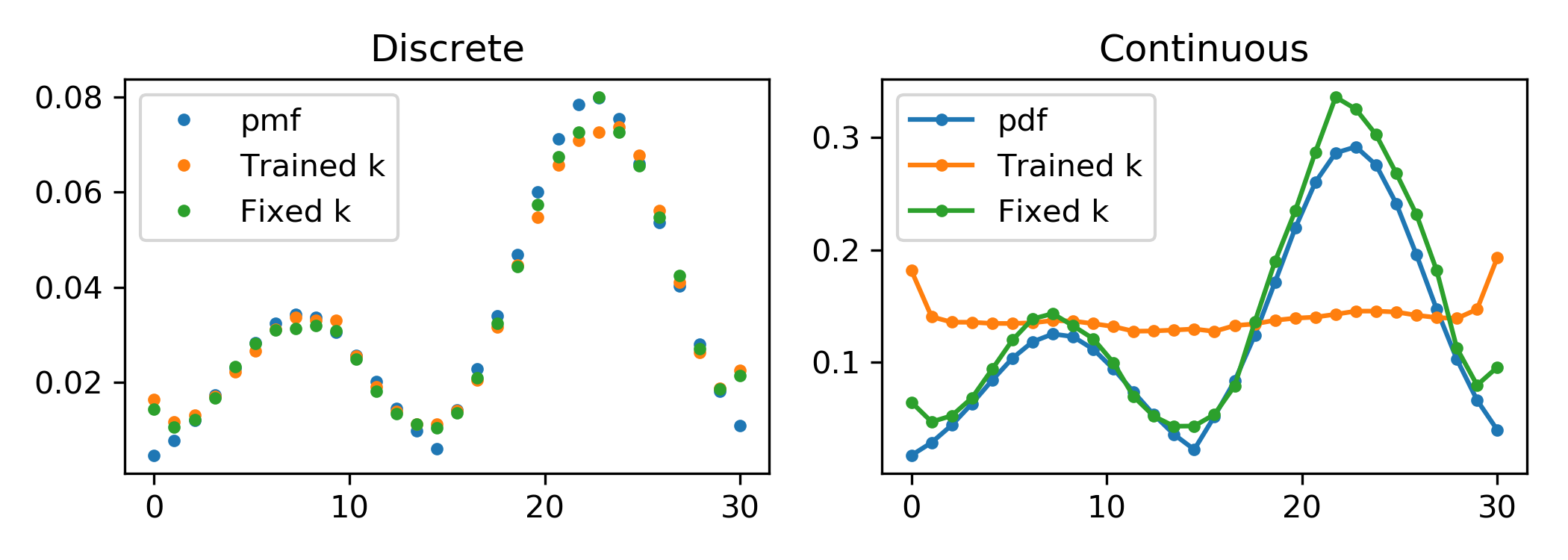}
\caption{Learned distributions with \GEM. }
\label{fig:tacogaussian}
\end{figure}

\cref{fig:tacogaussian} shows the learned probabilities and densities computed from $g$ for the four configurations. For the discrete distribution shown on the left, both a fixed and a learned similarity function perform well and can approximate the probability distribution. For the continuous case on the right, the fixed similarity learns a close approximation, while the learned similarity collapses into a uniform distribution. The reason for the collapse is that the non-linear embedding function $f$ is able to flatten out the data distribution, converting the original distribution into a uniform distribution; which is the maximum entropy distribution.

Therefore, in all but the simplest cases, we cannot solely learn a similarity function through maximising the \GEM~objective. In general, we may need to rely on domain-specific knowledge in order to learn a well-structured and meaningful similarity function. Nonetheless for MDPs, it is possible to learn a generic yet meaningful similarity based on \textit{temporal structure}. We keep the parameterisation of the similarity function as $k(x, x') =  e^{-c\| f(x) - f(x')\|_2}$, and augment our objective with an additional regularization term on the embeddings $f$, which we call \emph{adjacency regularisation}. The \emph{adjacency regularisation} (\AR) is a generalised pseudo-Huber loss with exponent $q$ and offset $\delta$:
\begin{equation}
    \label{eq:ar.objective}
    \AR \triangleq -\sum_{t=1}^{T} \left(\delta^q + \|f(x_t) - f(x_{t+1})\|_2^q \right)^{1/q},
\end{equation}
where $x_t \sim p^\pi_t$, $a_t \sim \pi(x_t)$ and $x_{t+1} \sim p(x_t, a_t)$.
This regularisation pulls together the embeddings of time-adjacent states. Combined with \GEM, which tries to expand the distance between embeddings, this trains an embedding where approximately the closest states are those that are 1-step apart; the second closest states are those that are 2-steps apart; and so on and so forth. This is similar to the notion of reachability~\citep{savinov2018episodic}, which tries to learn the expected shortest number of steps between any two states. We note that the higher values of exponent $q$ are more robust to distribution shift and relate closer to a reachability distance. In our experiments we find than that $q=4$ works well.

Note that \AR~is symmetric in time. This means that \AR ends up averaging the contribution of asymmetric transitions. For example, a one-way transition between $x_t$ and $x_{t+1}$ is approximately equivalent to a symmetric transition with probability $0.5$.

\subsection{{\GEM} Algorithm}
\label{sec:algorithm}

The final \GEM algorithm is the combination of the geometry-aware \GEM objective, \AR, and maximising the environment reward (see \cref{alg:tacomax}). To train the embedding $f$ we optimise the combination of the \AR objective and \GEM objective (\cref{alg:f}). To train $g$, we maximise the empirical \GEM objective (\cref{alg:g}).\footnote{Note that $r(x_t)$ is a constant so its gradient  with respect to $g$ or $f$ equals zero.}
Finally, to train $\pi$, we use a policy gradient algorithm (\cref{alg:pg}). Note that $\pi$ is optimised with respect to the empirical estimate of the \GEM objective function plus the environment reward, aimed at striking a balance between MSVE exploration and exploitation. More details on the algorithm is in \cref{app:expdetails}.

\begin{algorithm}[ht]
 \KwInputs{neural networks $g$ and $f$, policy $\pi$, Huber offset $\delta$,
 Huber exponent $q$, regularisation scale $c$}
 \For{$i\leftarrow 1$ \KwTo $\infty$}{
    Sample batch of episodes $B = \{(x_t, r_t)\}$ \\
    Sample batch of episodes $B' = \{x'_t\}$ \\
    For every $x_t$ draw a randomly shuffled  sample  $x'$ uniformly form  $B'$ \\
    Set $k(x_t,x') = e^{-\| f(x_t) - f(x')\|_2}$ \\
    Let $r^{\mathrm{\GEM}}_t = \ln g(x_t) - k(x_t,x')\left(g(x_t)+g(x')\right)$ \label{alg:r} \\
    Set $R_{t} = r_t + r^{\mathrm{\GEM}}_t$ \\
    Take gradient step for $\underset{g}{\max} \; \underset{t}{\sum} R_{t} $ from \cref{eq:grad.si} \label{alg:g} \\
    PolicyGradientStep for $\underset{\pi}{\max
    }\; \underset{t}{\sum} R_{t} $ from \cref{eq:grad.theta} \label{alg:pg} \\
    Take gradient step for $\underset{f}{\max} \; \underset{t}{\sum} R_t - c \cdot \left(\delta^q + \|f(x_t) - f(x_{t+1})\|_2^q \right)^{1/q}$ \label{alg:f}
 }
 \caption{Pseudocode of \GEM}
 \label{alg:tacomax}
\end{algorithm}

\section{{\GEM} Experiments}
\label{sec:experiments}

In this section, we conduct in-depth experimental analyses of various aspects of the \GEM~algorithm. All plots show 95\% confidence intervals over 5 seeds. The details on the implementation and the choice of hyper-parameters are in \cref{app:expdetails}.

\subsection{Intrinsic Reward Normalization}

The magnitude and variance of the intrinsic reward can vary greatly across different environments.
To mix intrinsic and extrinsic rewards in a consistent way in our experiments, we first standardise, scale, and shift our intrinsic reward $r_t^{\GEM}$:
$r_t^{\GEM} \rightarrow \left( \frac{r_t^{\GEM} - \mu}{\sigma}\, s + m\right),$
where $\mu$ (respectively $\sigma$) is an exponential running average of the mean (respectively standard deviation) of the intrinsic reward, and $s$ and $m$ are hyperparameters for the new standard deviation and mean. We then sum this normalized intrinsic reward with the extrinsic reward $R_t = r_t + r_t^{\GEM}$. We found that using this more general normalisation scheme allows \GEM~to be robust to different resolutions of the learned embedding and different sizes of environments.

\subsection{Comparisons with Baselines}

To test the basic exploration capabilities of \GEM, we start with a simple \texttt{2-Rooms} gridworld environment consisting of two rooms and a bottleneck connection (\Cref{fig:envs}). The agent starts in the top-left-hand corner, and thus must go all the way to the right and then down to pass the bottleneck in order to reach the goal in the bottom-left corner. 

To test the ability to do exhaustive exploration, we designed a tree-like gridworld: \texttt{16-Leaves} (\Cref{fig:envs}). The agent starts near the centre, and needs to branch off in multiple directions, ending up in one of the $16$ ``leaves''. Every episode, one of the green squares in a leaf is randomly picked, and a reward block is placed there; all the other green squares become normal free squares. Moreover, each episode is barely long enough for an agent to reach the end of one leaf. Therefore, an agent must exhaustively explore all leaves  in order to learn that visiting the green square results in a reward. Importantly, the agent must be able to visit different leaves across episodes. In both the \texttt{2-Rooms} and \texttt{16-Leaves} environments, the agent is given as input an RGB observation from which it must learn to extract relevant features with a deep convolutional neural network.

\begin{figure}[th]
    \centering
    \includegraphics[width=0.9\linewidth]{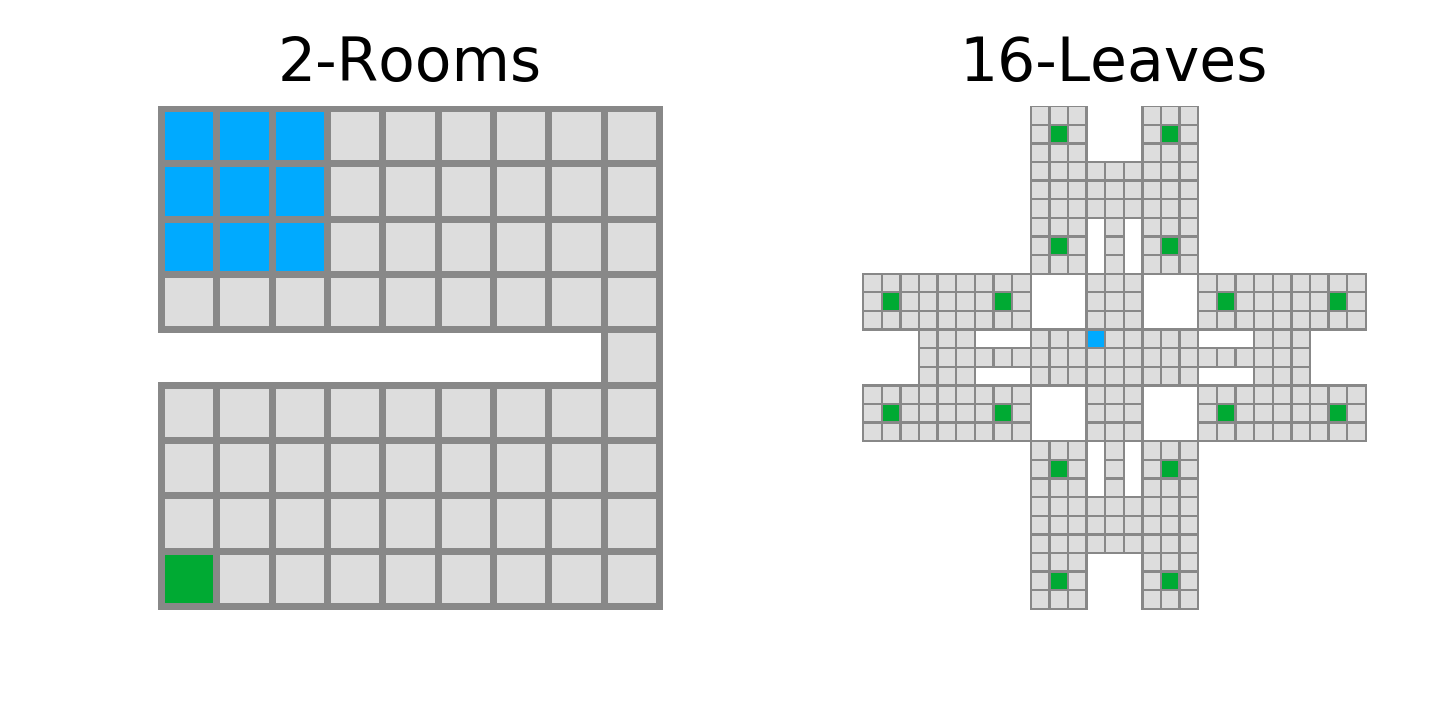}
    \caption{\texttt{2-Rooms} and \texttt{16-Leaves} gridworld environments. Every episode, the agent spawns randomly in one blue square, and a reward spawns randomly in one of the green squares.}
    \label{fig:envs}
\end{figure}

\subsubsection{Comparison with the Empirical Oracle MSVE}
\label{sec:sweep2timescale}

In this experiment we compare the performance of \GEM with an idealised variant of alternating optimisation approach taken by some prior work \citep{lee2019efficient, pong2019skew}. The basic alternating optimisation approach is to first estimate the stationary distribution over states $p^\pi(x)$, often using a generative model or a non-parametric estimate such as kernel-density estimation. Then the intrinsic reward is set to be $-\log p(x)$, in order to drive the policy towards less visited states.

We use an empirical oracle to estimate the stationary distribution $p^\pi(x)$, by using an exponential moving average of visitation counts of the true states, which is essentially the best that can be done for these simple, discrete domains. Note that this uses \emph{privileged information} on the state index to get access to the true state-visitation counts from the environment. Thus, this is not a practically implementable approach in larger or continuous domains. Also note that no geometry-aware information is being used; standard Shannon entropy is being maximised over the true states.

We simulate an alternating optimisation scheme that updates the counts every step, but only updates the policy every $n$ steps, where $n \in \{1, 5, 10\}$. This simulates the effect of letting the inner optimization take more gradient steps in order to be more accurate for the case when using function approximation. Because we use true counts, our alternating optimisation is stable even for $n=1$, but in general with function approximation, $n>1$ may be necessary for stability.

\Cref{fig:sweep2timescale} compares these methods with \GEM. \GEM~is just as fast as the idealised case of $n=1$, and multiple times faster than larger values of $n$.
These results show that the \GEM~algorithm is very efficient and confirms that it does not require complex optimization schemes to be stable; everything is optimized and updated together. Furthermore, the reason \GEM is able to match the oracle even with function approximation is because \GEM takes advantage of geometry information, which allows \GEM to focus exploration on more interesting states. See \cref{sec:adjreg} for a more in-depth look at how geometry affects \GEM exploration.

\begin{figure}
    \centering
    \includegraphics[width=1.0\linewidth]{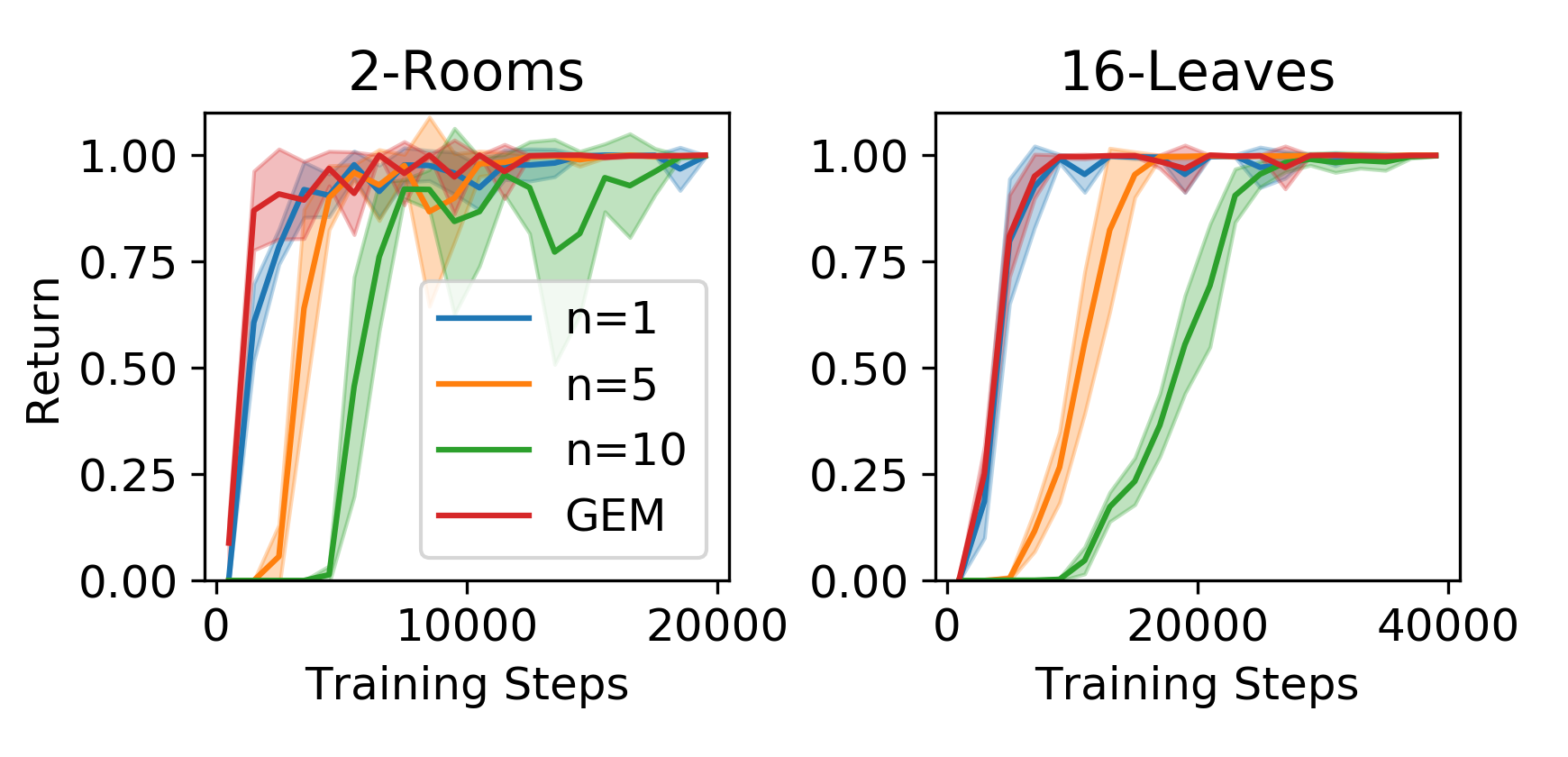}
    \caption{Empirical-Oracle MSVE vs. \GEM}
    \label{fig:sweep2timescale}
\end{figure}

\subsubsection{Comparison with NGU and RND}
\label{sec:ngurnd}

Here, we compare against 3 baselines: RND, NGU without RND, and NGU with RND. RND (Random Network Distillation) is a simple technique that computes an intrinsic reward based on prediction error of predicting a fixed, random projection of the observation \citep{burda2018exploration}. It is an example of a transient intrinsic reward that eventually vanishes.
NGU (Never Give Up) keeps an episodic memory of visited state embeddings, and computes an intrinsic reward for the next state that is inversely correlated with its distance to previously visited states \citep{Badia2020Never}. NGU converges to a policy that tries to visit diverse states within an episode, but has no incentive to be diverse across episodes. The combination of NGU with RND uses NGU to be diverse within episodes, and relies on RND to be diverse across episodes and across training, and is state-of-the-art for exploration on Atari \citep{badia2020agent57}.

\Cref{fig:comparengurnd} shows the comparison of the baselines with \GEM.
Along with comparing on our gridworlds, we also consider two additional continuous-state domains, \texttt{MountainCar} and \texttt{CartpoleSwingup}, to illustrate the effectiveness of \GEM in continuous domains. 

The behaviour of the \GEM~algorithm is similar to NGU with RND, as trying to maximize entropy implies diversity both within an episode and across episodes in order to visit all states. However \GEM~is able to achieve this kind of behaviour through a single, simple, principled objective, without needing to add extra components.

From \Cref{fig:comparengurnd} we see the shortcomings of NGU without RND, which is not able to be diverse across episodes, and thus is not able to efficiently explore all the different leaves of \texttt{16-Leaves}, and converges to a sub-optimal policy. RND is also not very effective, as it is slow and unstable, and heavily dependent on neural network architecture. NGU with RND is able to combine the best of both and consistently solves all tasks. Finally, \GEM~is also able to achieve the best of both worlds and be faster at solving the gridworld tasks.

\texttt{MountainCar} and \texttt{CartpoleSwingup} domains are standard, sparse-reward environments. Both are continuous-state domains with asymmetric transitions. We see that although the \AR used in the \GEM objective is symmetric, \GEM has no problem learning a useful state embedding and similarity function for solving these tasks, and is comparable with baselines.

\begin{figure}[th]
    \centering
    \includegraphics[width=1\linewidth]{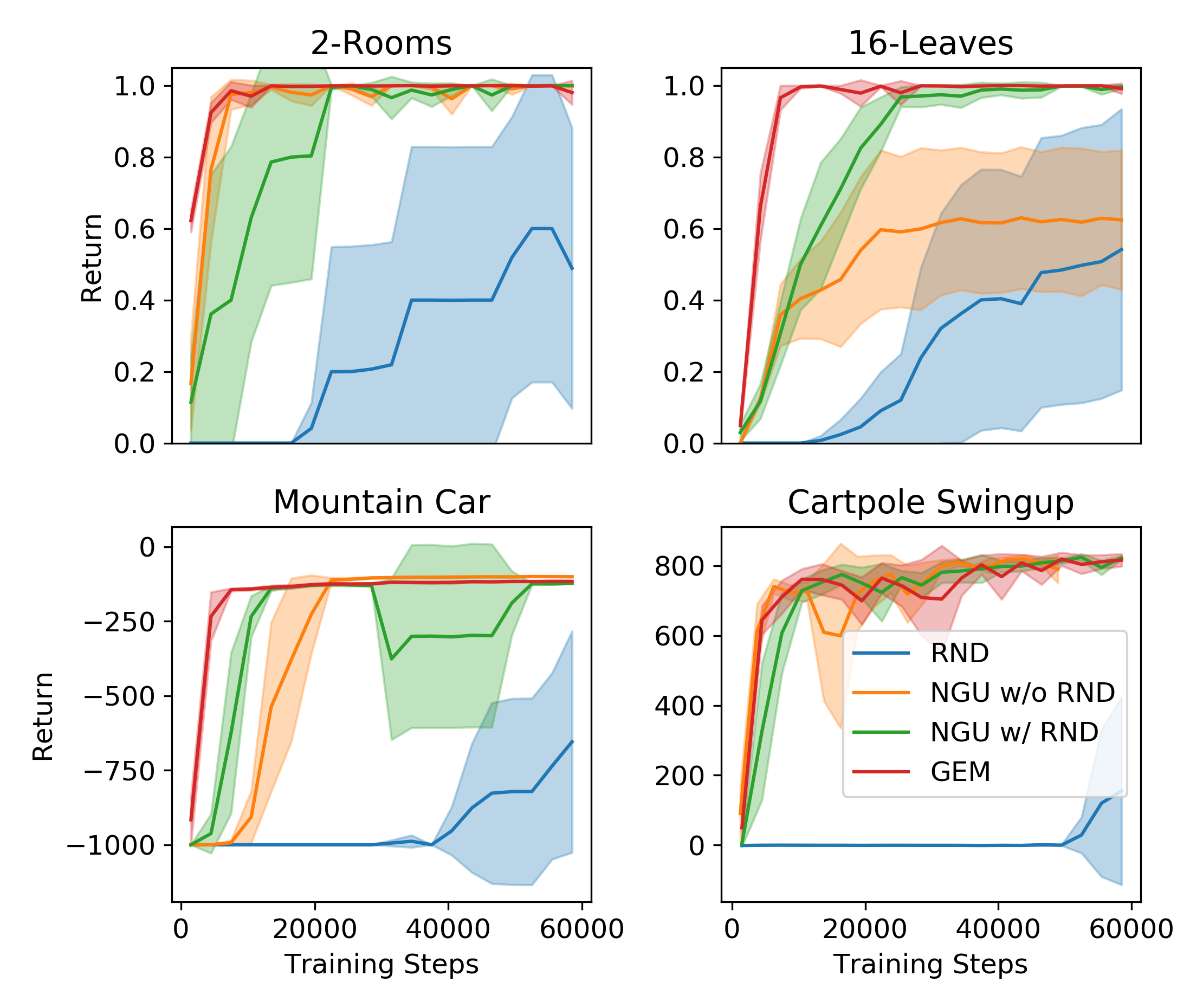}
    \caption{Comparison of \GEM versus baselines across multiple environments.}
    \label{fig:comparengurnd}
\end{figure}

\subsection{Further Analyses}
\subsubsection{Impact of Adjacency Regularization}
\label{sec:adjreg}
In this section, we show the effect of using adjacency regularization (\AR) to shape the embeddings and similarity function. We compare \GEM~with and without \AR on the \texttt{2-Rooms} environment. We also test on a noisy version of the environment, where we add a square whose colour is randomly chosen from $65536$ different colours at every step.

\begin{figure}[th]
    \centering
    \includegraphics[width=0.9\linewidth]{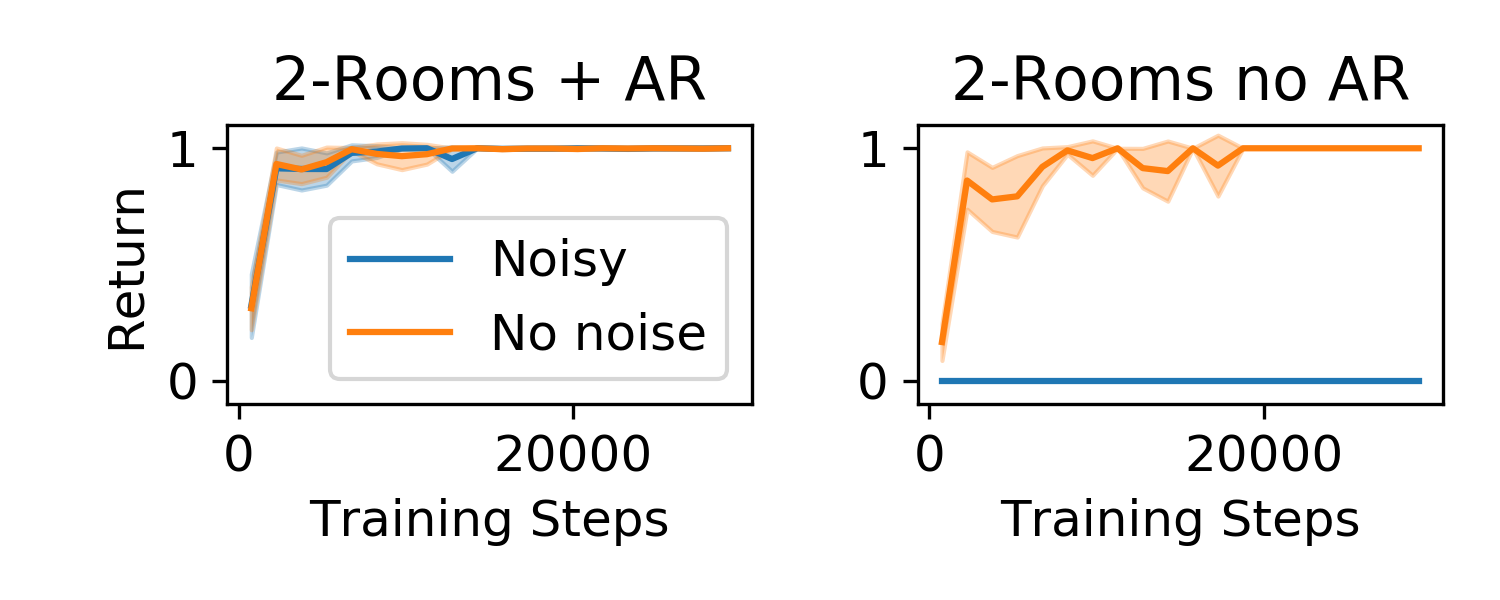}
    \caption{\GEM (left) and \GEM without \AR (right) in noisy and noiseless \texttt{2-Rooms}.}
    \label{fig:togglenoiseadj}
\end{figure}

In \Cref{fig:togglenoiseadj}, we see that when using \AR, there is no difference in performance between noiseless and noisy environments. On the other hand, without \AR, the agent completely fails to solve the noisy task. In fact, it fails improve upon a uniformly random policy.

\begin{figure}
    \centering
    \includegraphics[width=0.8\linewidth]{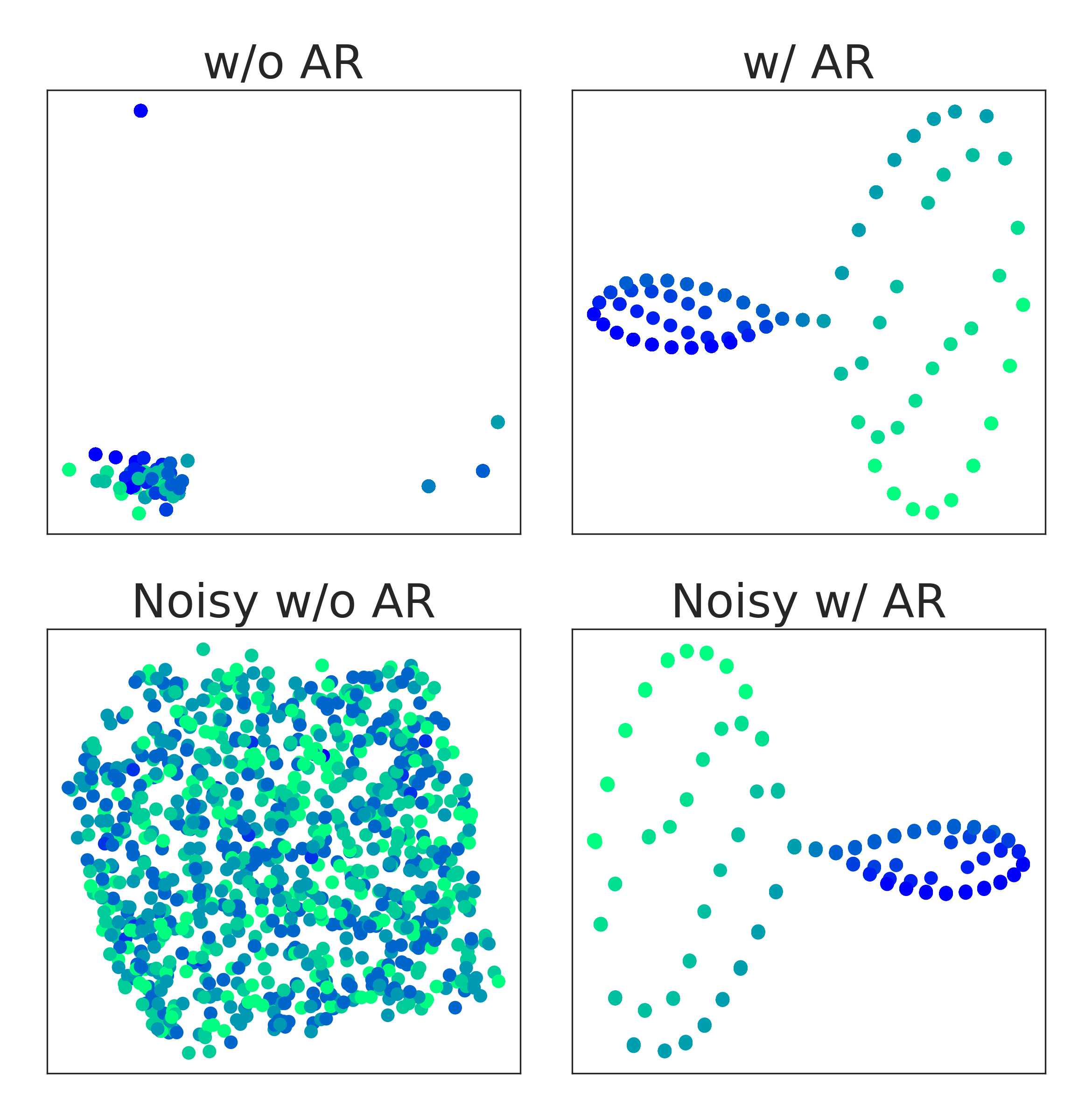}
    \caption{\texttt{2-Rooms} Embeddings for different settings of \AR and noise.}
    \label{fig:2roomembedding}
\end{figure}

Taking a closer look, we examine the embeddings learned for the 4 different settings. \Cref{fig:2roomembedding} shows the first two principal components of the learned embeddings. We see that \AR is able to learn the grid-like geometry of \texttt{2-Rooms} and is completely robust to noise. However, without \AR, the learned embeddings are completely random, and are heavily impacted by the noise instead of ignoring it. Since the noise is a much larger source of entropy than the agent position, \GEM~can trivially maximize entropy by just paying attention to the noise. Further insights into the structure of learned embeddings can be found in \cref{app:additional_experiments}.

\subsubsection{Embedding Resolution}
\label{sec:sweepsim}

In this section, we examine how the exploration behavior of \GEM is affected by the resolution of the embeddings and the similarity function. We can make the resolution \emph{finer} by making the distance between embedding points \emph{larger}, and thus less similar (a coarser resolution means distances are smaller, and thus states are more similar). This can be done by increasing the Huber offset $\delta$ in \AR (\cref{eq:ar.objective}), or decreasing the scale of the regularisation $c$ in the total loss.

\begin{figure}[th]
     \centering
     \begin{subfigure}{0.48\linewidth}
         \centering
         \includegraphics[width=\linewidth]{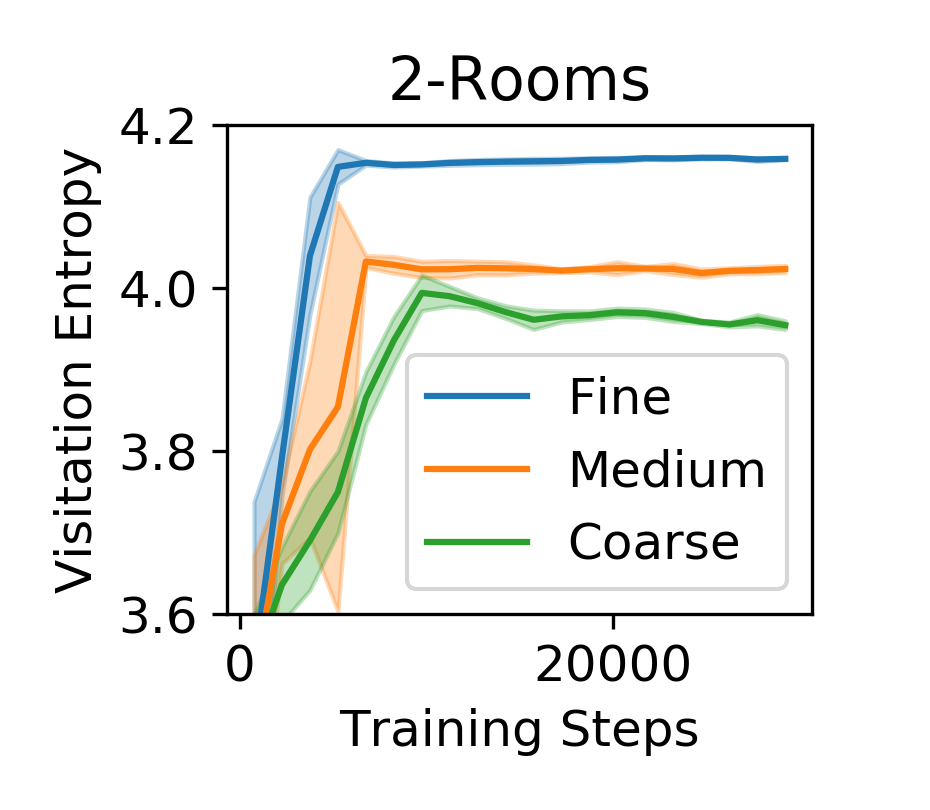}
     \end{subfigure}
     \begin{subfigure}{0.5\linewidth}
         \centering
         \includegraphics[width=\linewidth]{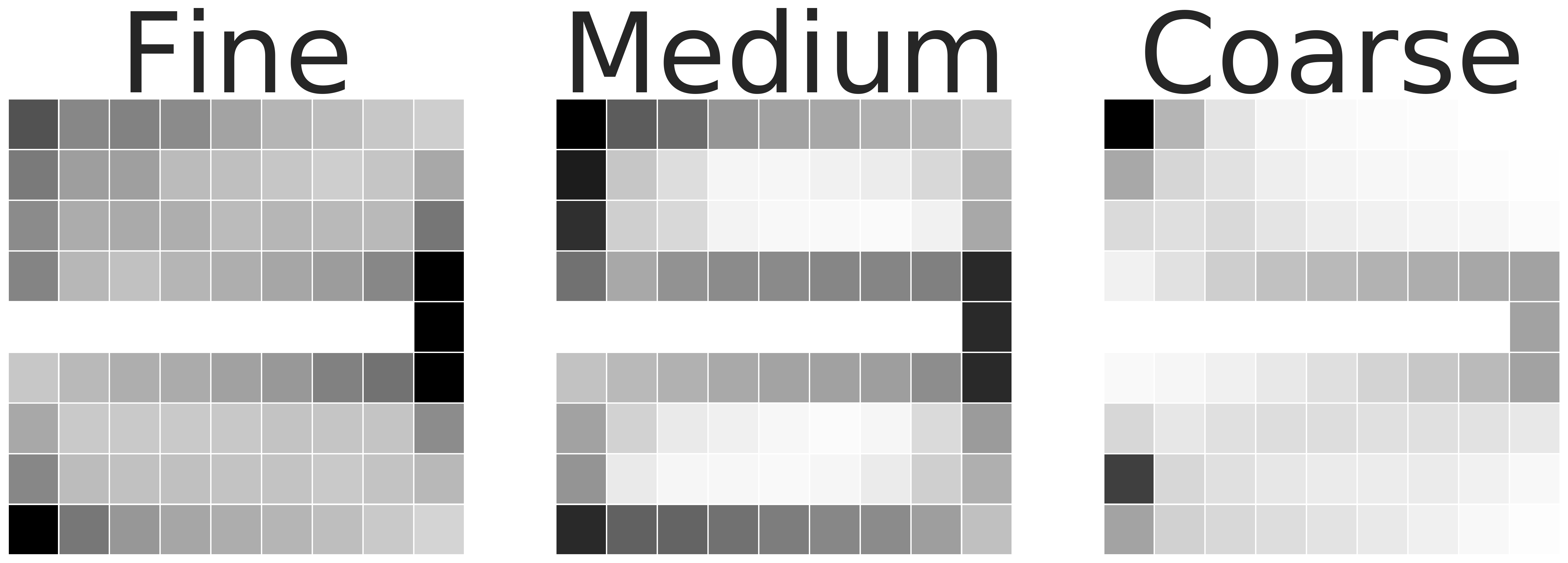}
     \end{subfigure}
     \caption{Visitation entropies (left) and heatmaps (right) for different embedding resolutions.}
     \label{fig:sweepdist}
\end{figure}

We compare three different resolutions: coarse ($\delta=0.3, c=20$), medium ($\delta=0.6, c=10$), and fine ($\delta=1, c=1$).
\Cref{fig:sweepdist} shows the true visitation entropy achieved for these resolutions, with finer resolutions resulting in larger entropy. This is because coarser embeddings cluster states closer together, leading to a smaller effective state space and a smaller maximum visitation entropy. A different illustration of this phenomenon is given in \Cref{fig:sweepdist}, which shows the visitation heatmaps in \texttt{2-Rooms} for the three resolutions. For the fine case, the state-visitation distribution is close to uniform in each of the rooms, whereas for the medium case it becomes skewed towards the room boundaries. This is due to states in the middle of the rooms being clustered together, while the opposite boundaries of the room are dissimilar enough that they count as distinct states. Finally, for the the coarse case, the embeddings are so close together that they effectively form a line and there is no entropic incentive to deviate from a single path.

These results show that the exploratory behavior can be modified by tuning the resolution of the embeddings. In particular, the ability to reduce the effective state space size by employing coarser embeddings is especially important when scaling up the algorithm to larger and larger state spaces.

\section{Related Work}

Prior work in MSVE exploration used an iterative algorithm that requires retaining a mixture of all past policies as well as kernel density estimation for estimating densities~\citep{hazan2018provably}, which is hard to scale to larger and complex domains for which a kernel is hard to define. Approximations were later which improved computational complexity but only for the tabular setting~\citep{mutti2020intrinsically}. Maximum entropy can also be viewed as trying to match the state-visitation distribution to a uniform distribution, as a special case from~\citep{lee2019efficient}, which casts the problem as a two-player minimax game. Thus this approach requires the use of alternating optimisation schemes which need additional optimisation tweaks to be stable. It is also possible to estimate differential Shannon entropy using non-parametric k-nearest neighbour methods~\citep{beirlant1997nonparametric}, however their connection to geometry-aware Shannon entropy is unclear. Beyond geometry-aware Shannon entropy, there is work on other entropy measures that are geometrically-aware such as Sinkhorn negentropy \citep{mensch2019geometric}.

Maximising diversity instead of entropy is a similar objective that tries to visit as many different states as possible. Approaches such has maximising diversity between trajectories can be used to discover skills for reaching diverse sets of states~\citep{lim2012autonomous,gajane2019autonomous,gregor2016variational}, but lack a way to inject structural information. One can also maximise diversity within a single trajectory~\citep{Badia2020Never}, but it relies on costly non-parametric estimates. As a step towards maximum entropy, \citep{islam2019marginalized} optimises a variational lower bound as a form of regularisation. Furthermore, maximising the diversity of goal states in goal-conditioned RL~\citep{andrychowicz2017hindsight,pong2019skew} is another way of performing exploration based on the principle of maximum entropy, but they are not equipped to leverage on the geometry of problem domain.  They also require the use of alternating optimisation schemes as well.

Other approaches that use intrinsic rewards for exploration design such exploration bonuses to compute a non-stationary, decaying novelty~\citep{strehl2006pac, bellemare2016unifying, schmidhuber1991possibility} and therefore are more difficult to optimise as well as they do not come with a straightforward way to inject structure.
Yet another approach is to try finding policies that look for the stochastic shortest path~\citep{tarbouriech2019no,cohen2020near}. A more global approach is taken by~\cite{jin2020reward-free} which defines reward-free reinforcement learning as a quest for generating a given number of reward-free trajectories.

\GEM can be seen as a noise-contrastive approach~\citep{ContrastiveHyvarinen,ozair2019wasserstein} for MSVE, where both positive and negative distributions are built upon the same underlying distribution~\citep{wu2018unsupervised}. There are key innovations in the design of the \GEM objective through which we generalise the noise-contrastive approach to tackle the MSVE exploration problem at scale. In particular, {\bf (i)} \GEM uses different losses for the positive and negative examples, resulting in an \textit{asymmetric loss} that maximises the Shannon entropy, and {\bf (ii) } \GEM deploys a similarity function between pairs of states through which \GEM can generalize its estimation of state-visitation distributions to continuous spaces.

Finally,  \AR regularisation has been previously used to learn the state representation in the  control literature~\citep{lesort2018state} under the name \emph{slowness feature analysis}~\citep{wiskott2002slow, bohmer2015autonomous}. However in prior work \AR is used as an auxiliary task for RL or imitation learning. Whereas in \GEM  \AR regularisation is an indispensable component of the main algorithm.

\section{Conclusion}

In this paper, we introduced \GEM, \GEMFullName, a new approach for exploration in RL, with two main contributions. First, our key theoretical contribution is casting geometry-aware maximum state-visitation entropy as the tractable problem of optimising a simple, novel noise-contrastive objective. Second, to capture the geometry of the domain, we equip \GEM with a similarity function, learned through \textit{Adjacency Regularisation} (\AR), a simple regularisation term added to the \GEM objective. \AR allows us to learn a similarity function that preserves the temporal structure of the MDP in our embeddings. We have numerically evaluated \GEM on discrete- and continuous- state sparse reward problems where we can analyse and measure exploratory behaviour, and we have shown that our approach is efficient and robust to noise. By maximising the overall entropy of the state-visitation distribution, we allow for both inter and intra-episode diversity of states.

For future work, we hope to scale up \GEM to larger domains, and investigate how to capture more complex structure. Another direction may be to extend \GEM to partially observable domains. Finally, \GEM's exhaustive exploration may be useful for reward-free RL~\citep{jin2020reward-free}.

\subsubsection*{Acknowledgements}

Thanks to Pierre Richemond for discussions on high dimensional embedding learning and projection methods related to Adjacency Regularisation.

\bibliographystyle{plainnat}
\bibliography{main.bib}

\begin{thebibliography}{55}
\providecommand{\natexlab}[1]{#1}
\providecommand{\url}[1]{\texttt{#1}}
\expandafter\ifx\csname urlstyle\endcsname\relax
  \providecommand{\doi}[1]{doi: #1}\else
  \providecommand{\doi}{doi: \begingroup \urlstyle{rm}\Url}\fi

\bibitem[Andrychowicz et~al.(2017)Andrychowicz, Wolski, Ray, Schneider, Fong,
  Welinder, McGrew, Tobin, Abbeel, and Zaremba]{andrychowicz2017hindsight}
Marcin Andrychowicz, Filip Wolski, Alex Ray, Jonas Schneider, Rachel Fong,
  Peter Welinder, Bob McGrew, Josh Tobin, OpenAI~Pieter Abbeel, and Wojciech
  Zaremba.
\newblock Hindsight experience replay.
\newblock In \emph{Advances in neural information processing systems}, pages
  5048--5058, 2017.

\bibitem[Azar et~al.(2017)Azar, Osband, and Munos]{azar2017minimax}
Mohammad~Gheshlaghi Azar, Ian Osband, and R{\'e}mi Munos.
\newblock Minimax regret bounds for reinforcement learning.
\newblock In \emph{Proceedings of the 34th International Conference on Machine
  Learning-Volume 70}, pages 263--272. JMLR. org, 2017.

\bibitem[Badia et~al.(2020{\natexlab{a}})Badia, Piot, Kapturowski, Sprechmann,
  Vitvitskyi, Guo, and Blundell]{badia2020agent57}
Adrià~Puigdomènech Badia, Bilal Piot, Steven Kapturowski, Pablo Sprechmann,
  Alex Vitvitskyi, Daniel Guo, and Charles Blundell.
\newblock Agent57: Outperforming the atari human benchmark, 2020{\natexlab{a}}.

\bibitem[Badia et~al.(2020{\natexlab{b}})Badia, Sprechmann, Vitvitskyi, Guo,
  Piot, Kapturowski, Tieleman, Arjovsky, Pritzel, Bolt, and
  Blundell]{Badia2020Never}
Adrià~Puigdomènech Badia, Pablo Sprechmann, Alex Vitvitskyi, Daniel Guo,
  Bilal Piot, Steven Kapturowski, Olivier Tieleman, Martin Arjovsky, Alexander
  Pritzel, Andrew Bolt, and Charles Blundell.
\newblock Never give up: Learning directed exploration strategies.
\newblock In \emph{International Conference on Learning Representations},
  2020{\natexlab{b}}.
\newblock URL \url{https://openreview.net/forum?id=Sye57xStvB}.

\bibitem[Beirlant et~al.(1997)Beirlant, Dudewicz, Gy{\"o}rfi, and Van~der
  Meulen]{beirlant1997nonparametric}
Jan Beirlant, Edward~J Dudewicz, L{\'a}szl{\'o} Gy{\"o}rfi, and Edward~C
  Van~der Meulen.
\newblock Nonparametric entropy estimation: An overview.
\newblock \emph{International Journal of Mathematical and Statistical
  Sciences}, 6\penalty0 (1):\penalty0 17--39, 1997.

\bibitem[Bellemare et~al.(2016)Bellemare, Srinivasan, Ostrovski, Schaul,
  Saxton, and Munos]{bellemare2016unifying}
Marc Bellemare, Sriram Srinivasan, Georg Ostrovski, Tom Schaul, David Saxton,
  and Remi Munos.
\newblock Unifying count-based exploration and intrinsic motivation.
\newblock In \emph{Advances in neural information processing systems}, pages
  1471--1479, 2016.

\bibitem[Bellman and Kalaba(1965)]{bellman1965dynamic}
Richard Bellman and Robert Kalaba.
\newblock \emph{Dynamic programming and modern control theory}.
\newblock Academic Press New York, 1965.

\bibitem[Bertsekas(1995)]{bertsekas1995dynamic}
Dimitri Bertsekas.
\newblock \emph{Dynamic programming and optimal control}, volume~1.
\newblock Athena Scientific, Belmont, MA, 1995.

\bibitem[B{\"o}hmer et~al.(2015)B{\"o}hmer, Springenberg, Boedecker,
  Riedmiller, and Obermayer]{bohmer2015autonomous}
Wendelin B{\"o}hmer, Jost~Tobias Springenberg, Joschka Boedecker, Martin
  Riedmiller, and Klaus Obermayer.
\newblock Autonomous learning of state representations for control: An emerging
  field aims to autonomously learn state representations for reinforcement
  learning agents from their real-world sensor observations.
\newblock \emph{KI-K{\"u}nstliche Intelligenz}, 29\penalty0 (4):\penalty0
  353--362, 2015.

\bibitem[Bojanowski et~al.(2017)Bojanowski, Joulin, Lopez-Paz, and
  Szlam]{bojanowski2017optimizing}
Piotr Bojanowski, Armand Joulin, David Lopez-Paz, and Arthur Szlam.
\newblock Optimizing the latent space of generative networks.
\newblock \emph{arXiv preprint arXiv:1707.05776}, 2017.

\bibitem[Brafman and Tennenholtz(2002)]{brafman2002r}
Ronen~I Brafman and Moshe Tennenholtz.
\newblock R-max-a general polynomial time algorithm for near-optimal
  reinforcement learning.
\newblock \emph{Journal of Machine Learning Research}, 3\penalty0
  (Oct):\penalty0 213--231, 2002.

\bibitem[Burda et~al.(2018)Burda, Edwards, Storkey, and
  Klimov]{burda2018exploration}
Yuri Burda, Harrison Edwards, Amos Storkey, and Oleg Klimov.
\newblock Exploration by random network distillation.
\newblock \emph{arXiv preprint arXiv:1810.12894}, 2018.

\bibitem[Cohen et~al.(2020)Cohen, Kaplan, Mansour, and
  Rosenberg]{cohen2020near}
Alon Cohen, Haim Kaplan, Yishay Mansour, and Aviv Rosenberg.
\newblock Near-optimal regret bounds for stochastic shortest path.
\newblock \emph{arXiv preprint arXiv:2002.09869}, 2020.

\bibitem[Dann and Brunskill(2015)]{dann2015sample}
Christoph Dann and Emma Brunskill.
\newblock Sample complexity of episodic fixed-horizon reinforcement learning.
\newblock In \emph{Advances in Neural Information Processing Systems}, pages
  2818--2826, 2015.

\bibitem[de~Farias and Van~Roy(2003)]{de2003linear}
D.P. de~Farias and B.~Van~Roy.
\newblock The linear programming approach to approximate dynamic programming.
\newblock \emph{Operations Research}, 51, 2003.

\bibitem[Espeholt et~al.(2018)Espeholt, Soyer, Munos, Simonyan, Mnih, Ward,
  Doron, Firoiu, Harley, Dunning, et~al.]{espeholt2018impala}
Lasse Espeholt, Hubert Soyer, Remi Munos, Karen Simonyan, Volodymir Mnih, Tom
  Ward, Yotam Doron, Vlad Firoiu, Tim Harley, Iain Dunning, et~al.
\newblock Impala: Scalable distributed deep-rl with importance weighted
  actor-learner architectures.
\newblock \emph{arXiv preprint arXiv:1802.01561}, 2018.

\bibitem[Gajane et~al.(2019)Gajane, Ortner, Auer, and
  Szepesvari]{gajane2019autonomous}
Pratik Gajane, Ronald Ortner, Peter Auer, and Csaba Szepesvari.
\newblock Autonomous exploration for navigating in non-stationary cmps, 2019.

\bibitem[Gallego{-}Posada et~al.(2019)Gallego{-}Posada, Vani, Schwarzer, and
  Lacoste{-}Julien]{gallego2019gait}
Jose Gallego{-}Posada, Ankit Vani, Max Schwarzer, and Simon Lacoste{-}Julien.
\newblock {GAIT:} a geometric approach to information theory.
\newblock \emph{CoRR}, abs/1906.08325, 2019.
\newblock URL \url{http://arxiv.org/abs/1906.08325}.

\bibitem[Goodfellow(2016)]{goodfellow2016nips}
Ian Goodfellow.
\newblock Nips 2016 tutorial: Generative adversarial networks.
\newblock \emph{arXiv preprint arXiv:1701.00160}, 2016.

\bibitem[Gregor et~al.(2016)Gregor, Rezende, and
  Wierstra]{gregor2016variational}
Karol Gregor, Danilo~Jimenez Rezende, and Daan Wierstra.
\newblock Variational intrinsic control.
\newblock \emph{arXiv preprint arXiv:1611.07507}, 2016.

\bibitem[Gutmann and Hyv{\"a}rinen(2010)]{ContrastiveHyvarinen}
Michael Gutmann and Aapo Hyv{\"a}rinen.
\newblock Noise-contrastive estimation: A new estimation principle for
  unnormalized statistical models.
\newblock In \emph{AISTATS}, 2010.

\bibitem[Hazan et~al.(2018)Hazan, Kakade, Singh, and Soest]{hazan2018provably}
Elad Hazan, Sham~M. Kakade, Karan Singh, and Abby~Van Soest.
\newblock Provably efficient maximum entropy exploration.
\newblock \emph{arXiv preprint arXiv:1812.02690}, 2018.

\bibitem[Islam et~al.(2019)Islam, Ahmed, and Precup]{islam2019marginalized}
Riashat Islam, Zafarali Ahmed, and Doina Precup.
\newblock Marginalized state distribution entropy regularization in policy
  optimization.
\newblock \emph{arXiv preprint arXiv:1912.05128}, 2019.

\bibitem[Jin et~al.(2020)Jin, Krishnamurthy, Simchowitz, and
  Yu]{jin2020reward-free}
Chi Jin, Akshay Krishnamurthy, Max Simchowitz, and Tiancheng Yu.
\newblock Reward-free exploration for reinforcement learning.
\newblock \emph{arXiv preprint arXiv:2002.02794}, 2020.

\bibitem[Kakade et~al.(2003)]{kakade2003sample}
Sham~Machandranath Kakade et~al.
\newblock \emph{On the sample complexity of reinforcement learning}.
\newblock PhD thesis, University of London London, England, 2003.

\bibitem[Kearns et~al.(2002)Kearns, Mansour, and Ng]{Kearns2002}
Michael Kearns, Yishay Mansour, and Andrew Ng.
\newblock {A Sparse Sampling Algorithm for Near-Optimal Planning in Large
  Markov Decision Processes}.
\newblock \emph{Ukpmc.Ac.Uk}, pages 193--208, 2002.
\newblock ISSN 08856125.
\newblock \doi{10.1023/A:1017932429737}.
\newblock URL \url{http://ukpmc.ac.uk/abstract/CIT/512725}.

\bibitem[Lattimore and Hutter(2014)]{lattimore2014near}
Tor Lattimore and Marcus Hutter.
\newblock Near-optimal pac bounds for discounted mdps.
\newblock \emph{Theoretical Computer Science}, 558:\penalty0 125--143, 2014.

\bibitem[Lee et~al.(2019)Lee, Eysenbach, Parisotto, Xing, Levine, and
  Salakhutdinov]{lee2019efficient}
Lisa Lee, Benjamin Eysenbach, Emilio Parisotto, Eric Xing, Sergey Levine, and
  Ruslan Salakhutdinov.
\newblock Efficient exploration via state marginal matching.
\newblock \emph{arXiv preprint arXiv:1906.05274}, 2019.

\bibitem[Lesort et~al.(2018)Lesort, D{\'\i}az-Rodr{\'\i}guez, Goudou, and
  Filliat]{lesort2018state}
Timoth{\'e}e Lesort, Natalia D{\'\i}az-Rodr{\'\i}guez, Jean-Franois Goudou, and
  David Filliat.
\newblock State representation learning for control: An overview.
\newblock \emph{Neural Networks}, 108:\penalty0 379--392, 2018.

\bibitem[Lillicrap et~al.(2015)Lillicrap, Hunt, Pritzel, Heess, Erez, Tassa,
  Silver, and Wierstra]{lillicrap2015continuous}
Timothy~P Lillicrap, Jonathan~J Hunt, Alexander Pritzel, Nicolas Heess, Tom
  Erez, Yuval Tassa, David Silver, and Daan Wierstra.
\newblock Continuous control with deep reinforcement learning.
\newblock \emph{arXiv preprint arXiv:1509.02971}, 2015.

\bibitem[Lim and Auer(2012)]{lim2012autonomous}
Shiau~Hong Lim and Peter Auer.
\newblock {Autonomous exploration for navigating in MDPs}.
\newblock In \emph{Conference on Learning Theory}, 2012.

\bibitem[Mensch et~al.(2019)Mensch, Blondel, and Peyré]{mensch2019geometric}
Arthur Mensch, Mathieu Blondel, and Gabriel Peyré.
\newblock Geometric losses for distributional learning.
\newblock In \emph{ICML}, 2019.

\bibitem[Mnih et~al.(2015)Mnih, Kavukcuoglu, Silver, Rusu, Veness, Bellemare,
  Graves, Riedmiller, Fidjeland, Ostrovski, et~al.]{mnih2015human}
Volodymyr Mnih, Koray Kavukcuoglu, David Silver, Andrei~A Rusu, Joel Veness,
  Marc~G Bellemare, Alex Graves, Martin Riedmiller, Andreas~K Fidjeland, Georg
  Ostrovski, et~al.
\newblock Human-level control through deep reinforcement learning.
\newblock \emph{Nature}, 518\penalty0 (7540):\penalty0 529--533, 2015.

\bibitem[Mnih et~al.(2016)Mnih, Badia, Mirza, Graves, Lillicrap, Harley,
  Silver, and Kavukcuoglu]{mnih2016asynchronous}
Volodymyr Mnih, Adria~Puigdomenech Badia, Mehdi Mirza, Alex Graves, Timothy
  Lillicrap, Tim Harley, David Silver, and Koray Kavukcuoglu.
\newblock Asynchronous methods for deep reinforcement learning.
\newblock In \emph{International Conference on Machine Learning}, pages
  1928--1937, 2016.

\bibitem[Mutti and Restelli(2020)]{mutti2020intrinsically}
Mirco Mutti and Marcello Restelli.
\newblock An intrinsically-motivated approach for learning highly exploring and
  fast mixing policies.
\newblock In \emph{Proceedings of the AAAI Conference on Artificial
  Intelligence}, volume~34, pages 5232--5239, 2020.

\bibitem[Nemirovski(2004)]{nemirovski2004prox}
Arkadi Nemirovski.
\newblock Prox-method with rate of convergence o (1/t) for variational
  inequalities with lipschitz continuous monotone operators and smooth
  convex-concave saddle point problems.
\newblock \emph{SIAM Journal on Optimization}, 15\penalty0 (1):\penalty0
  229--251, 2004.

\bibitem[Nguyen et~al.(2010)Nguyen, Wainwright, and
  Jordan]{nguyen2010estimating}
XuanLong Nguyen, Martin~J Wainwright, and Michael~I Jordan.
\newblock Estimating divergence functionals and the likelihood ratio by convex
  risk minimization.
\newblock \emph{IEEE Transactions on Information Theory}, 56\penalty0
  (11):\penalty0 5847--5861, 2010.

\bibitem[Osband et~al.(2016)Osband, Blundell, Pritzel, and
  Van~Roy]{osband2016deep}
Ian Osband, Charles Blundell, Alexander Pritzel, and Benjamin Van~Roy.
\newblock Deep exploration via bootstrapped dqn.
\newblock In \emph{Advances in neural information processing systems}, pages
  4026--4034, 2016.

\bibitem[Osband et~al.(2018)Osband, Aslanides, and
  Cassirer]{osband2018randomized}
Ian Osband, John Aslanides, and Albin Cassirer.
\newblock Randomized prior functions for deep reinforcement learning.
\newblock In \emph{Advances in Neural Information Processing Systems}, pages
  8617--8629, 2018.

\bibitem[Osband et~al.(2020)Osband, Doron, Hessel, Aslanides, Sezener, Saraiva,
  McKinney, Lattimore, {Sz}epesv{\'a}ri, Singh, Van~Roy, Sutton, Silver, and
  van Hasselt]{osband2020bsuite}
Ian Osband, Yotam Doron, Matteo Hessel, John Aslanides, Eren Sezener, Andre
  Saraiva, Katrina McKinney, Tor Lattimore, Csaba {Sz}epesv{\'a}ri, Satinder
  Singh, Benjamin Van~Roy, Richard Sutton, David Silver, and Hado van Hasselt.
\newblock Behaviour suite for reinforcement learning.
\newblock In \emph{International Conference on Learning Representations}, 2020.
\newblock URL \url{https://openreview.net/forum?id=rygf-kSYwH}.

\bibitem[Ostrovski et~al.(2017)Ostrovski, Bellemare, van~den Oord, and
  Munos]{ostrovski2017count}
Georg Ostrovski, Marc~G Bellemare, A{\"a}ron van~den Oord, and R{\'e}mi Munos.
\newblock Count-based exploration with neural density models.
\newblock In \emph{Proceedings of the 34th International Conference on Machine
  Learning-Volume 70}, pages 2721--2730. JMLR. org, 2017.

\bibitem[Ozair et~al.(2019)Ozair, Lynch, Bengio, Van~den Oord, Levine, and
  Sermanet]{ozair2019wasserstein}
Sherjil Ozair, Corey Lynch, Yoshua Bengio, Aaron Van~den Oord, Sergey Levine,
  and Pierre Sermanet.
\newblock Wasserstein dependency measure for representation learning.
\newblock In \emph{Advances in Neural Information Processing Systems}, pages
  15578--15588, 2019.

\bibitem[Pathak et~al.(2017)Pathak, Agrawal, Efros, and
  Darrell]{pathak2017curiosity}
Deepak Pathak, Pulkit Agrawal, Alexei~A Efros, and Trevor Darrell.
\newblock Curiosity-driven exploration by self-supervised prediction.
\newblock In \emph{Proceedings of the IEEE Conference on Computer Vision and
  Pattern Recognition Workshops}, pages 16--17, 2017.

\bibitem[Pong et~al.(2019)Pong, Dalal, Lin, Nair, Bahl, and
  Levine]{pong2019skew}
Vitchyr~H Pong, Murtaza Dalal, Steven Lin, Ashvin Nair, Shikhar Bahl, and
  Sergey Levine.
\newblock Skew-fit: State-covering self-supervised reinforcement learning.
\newblock \emph{arXiv preprint arXiv:1903.03698}, 2019.

\bibitem[Puterman(1994)]{puterman1994markov}
Martin Puterman.
\newblock \emph{{Markov decision processes: discrete stochastic dynamic
  programming}}.
\newblock John Wiley \& Sons, 1994.

\bibitem[Savinov et~al.(2018)Savinov, Raichuk, Marinier, Vincent, Pollefeys,
  Lillicrap, and Gelly]{savinov2018episodic}
Nikolay Savinov, Anton Raichuk, Rapha{\"e}l Marinier, Damien Vincent, Marc
  Pollefeys, Timothy Lillicrap, and Sylvain Gelly.
\newblock Episodic curiosity through reachability.
\newblock \emph{arXiv preprint arXiv:1810.02274}, 2018.

\bibitem[Schmidhuber(1991)]{schmidhuber1991possibility}
J{\"u}rgen Schmidhuber.
\newblock A possibility for implementing curiosity and boredom in
  model-building neural controllers.
\newblock In \emph{Proc. of the international conference on simulation of
  adaptive behavior: From animals to animats}, pages 222--227, 1991.

\bibitem[Strehl et~al.(2006)Strehl, Li, Wiewiora, Langford, and
  Littman]{strehl2006pac}
Alexander~L Strehl, Lihong Li, Eric Wiewiora, John Langford, and Michael~L
  Littman.
\newblock Pac model-free reinforcement learning.
\newblock In \emph{Proceedings of the 23rd international conference on Machine
  learning}, pages 881--888, 2006.

\bibitem[Sutton and Barto(1998)]{sutton1998reinforcement}
Richard~S Sutton and Andrew~G Barto.
\newblock \emph{Reinforcement learning: An introduction}.
\newblock Cambridge Univ Press, 1998.

\bibitem[Sutton et~al.(1999)Sutton, McAllester, Singh, and
  Mansour]{sutton1999policy}
Richard~S. Sutton, David~A. McAllester, Satinder~P. Singh, and Yishay Mansour.
\newblock Policy gradient methods for reinforcement learning with function
  approximation.
\newblock In \emph{Proc. of NIPS}, volume~99, pages 1057--1063, 1999.

\bibitem[Tarbouriech et~al.(2019)Tarbouriech, Garcelon, Valko, Pirotta, and
  Lazaric]{tarbouriech2019no}
Jean Tarbouriech, Evrard Garcelon, Michal Valko, Matteo Pirotta, and Alessandro
  Lazaric.
\newblock No-regret exploration in goal-oriented reinforcement learning.
\newblock \emph{arXiv preprint arXiv:1912.03517}, 2019.

\bibitem[Tsallis(1988)]{tsallis1988possible}
Constantino Tsallis.
\newblock Possible generalization of {B}oltzmann-{G}ibbs statistics.
\newblock \emph{Journal of statistical physics}, 52\penalty0 (1-2):\penalty0
  479--487, 1988.

\bibitem[Williams(1992)]{williams1992simple}
Ronald~J Williams.
\newblock Simple statistical gradient-following algorithms for connectionist
  reinforcement learning.
\newblock \emph{Machine learning}, 8\penalty0 (3-4):\penalty0 229--256, 1992.

\bibitem[Wiskott and Sejnowski(2002)]{wiskott2002slow}
Laurenz Wiskott and Terrence~J Sejnowski.
\newblock Slow feature analysis: Unsupervised learning of invariances.
\newblock \emph{Neural computation}, 14\penalty0 (4):\penalty0 715--770, 2002.

\bibitem[Wu et~al.(2018)Wu, Xiong, Yu, and Lin]{wu2018unsupervised}
Zhirong Wu, Yuanjun Xiong, Stella Yu, and Dahua Lin.
\newblock Unsupervised feature learning via non-parametric instance-level
  discrimination.
\newblock \emph{arXiv preprint arXiv:1805.01978}, 2018.

\end{thebibliography}

\clearpage
\appendix

\onecolumn
\appendix


\section{Proofs and Derivations}
\label{app:proofs}

\subsection{From NCE to the {\GEM} Objective Function}

The main \GEM idea can be traced back to noise-contrastive estimation (NCE) \citep{ContrastiveHyvarinen}. The core idea of NCE is to learn to contrast between two separate distributions, with positive examples drawn from $ p^+$ and negative examples drawn from $p^-$. For this we often use a classifier $h(z)=\frac {g(z)}{1+g(z)}$, with $g(z)$ is some positive function often paramterised as $g(z)=e^{s(z)}$, where  $s(z)$ are the classifier logits. The objective of NCE is then a binary logistic regression maximisation:
\begin{align}
\label{eq:ce.loss}
    \mathrm{NCE} \triangleq \mathbb{E}_{z\sim p^+} \left[\log \left(h(z)\right)\right] + \mathbb{E}_{z\sim p^-}\left[ \log \left(1-h(z)\right)\right].
\end{align}
Then at the optimum, $g(z)$ will converge to the ratio of the probabilities $\frac{p^+(z)}{p^-(z)}$. In order to extract a single probability rather than a ratio of probabilities, we need to pick specific positive and negative distributions. Similar to \citep{wu2018unsupervised}, we turn the contrastive objective into an auto-contrastive objective where we pick positive examples $z$ to be the fully correlated pair $(x,x)$ of samples, and the negative examples to be the pair of fully independent samples $(x, x')$ of some random variable $x, x' \sim p$. Then intuitively, the probability of positive examples is $p(x)$, the probability of negative examples is $p(x)p(x)$, and thus the ratio is $g(z)=g(x, x) = \frac{p(x)}{p(x)p(x)}=1/p(x)$. Thus the auto-contrastive objective can be used to estimate  probability distributions.

In the context of maximum entropy exploration the cross-entropy objective of \cref{eq:ce.loss} can be used to obtain estimates of state-visitation probabilities in the   auto-contrastive regime. However the auto-contrastive objective based on \cref{eq:ce.loss} does not represent a standard notion of entropy by which we desire to tackle the problem of maximum state-visitation entropy (MSVE) exploration. In order to connect back to Shannon entropy, we need to significantly modify the auto-contrastive objective function and turn to using an asymmetric, objective between positive and negative examples as follows
\begin{align}
\text{\GEM}(h, \pi) &\triangleq  \mathbb{E}_{x\sim p^{\pi}} \left[\ln(h(x,x)) \right] - \mathbb{E}_{x,x' \sim p^{\pi}} \left[ h(x,x') \right]+1.
\end{align}
where $h$ is constrained to be non-negative. This is the origin of \cref{eq:taco.discrete.gen}.
As we have shown in \cref{thm:taco.disc.gen}, maximising this objective gives us the MSVE policy $\pi^\star_E$.

\subsection{Proof of \Cref{thm:taco.disc.gen}}
\tacodisc*
\begin{proof}
The objective is restated here:
\begin{align}
\text{\GEM}(h, \pi) &\triangleq  \mathbb{E}_{x\sim p^{\pi}} \left[\ln(h(x,x)) \right] - \mathbb{E}_{x,x' \sim p^{\pi}} \left[ h(x,x') \right]+1.
\end{align}
To find the maximiser of the objective, we break down the maximization to first maximise for $h$, and then maximise for $\pi$:
\begin{align}
\max_{h, \pi} \text{\GEM}(h, \pi) &= \max_{\pi} \max_{h} \text{\GEM}(h, \pi)
\end{align}
So first we try to maximise $h$. Since $\X$ is discrete, we can consider $h(x, x')$ a matrix of variables, where we have a variable for every $x, x' \in \X$. Then we write out the expectations in terms of sums:
\begin{align}
\text{\GEM}(h, \pi) =  \sum_x p^{\pi}(x) \left[\ln h(x,x) \right] -  \sum_x p^{\pi}(x) \sum_{x'} p^{\pi}(x') h(x,x') + 1.
\end{align}

Note that $h(x, x')$ for $x \neq x'$ only appears as a negative term as the second term. Thus, the maximum of zero can be attained by setting $h(x, x') = 0$. Thus we can simplify the objective by only considering the variables when $x = x'$, and the letting $g_x = h(x, x)$:
\begin{align}
\sum_x p^{\pi}(x) \left[ \ln g_x \right] -  \sum_x p^{\pi}(x)p^{\pi}(x) g_x + 1.
\end{align}
Since $g_x$ is a separate variable for every $x$, we can maximise this sum by maximising every term of the sum w.r.t. $x$, reducing this to the following single variable problem:
\begin{align}
\max_{g_x} \; p^{\pi}(x) \left[ \ln g_x \right] -  p^{\pi}(x)p^{\pi}(x) g_x.
\end{align}

First, note that when $p^{\pi}(x) = 0$, the objective becomes a constant, and therefore does not matter when trying to maximise $g(x)$; $g(x)$ can take on any value. Thus from now on we only consider $x$ such that $p^{\pi}(x) > 0$.

Then, we can find the critical points by setting the derivative to zero:
\begin{align}
0 &= \frac{d}{d g_x} \left[ p^{\pi}(x) \left[ \ln g_x \right] -  p^{\pi}(x)p^{\pi}(x) g_x \right] \\
\implies 0 &= \frac{1}{g_x} p^{\pi}(x) -  p^{\pi}(x)p^{\pi}(x)  \\
\implies g_x &=  \frac{1}{p^{\pi}(x)}
\end{align}
To see what kind of critical point this is, we compute the second derivative:
\begin{align}
& \frac{d^2}{d g_x^2} \left[ p^{\pi}(x) \left[ \ln g_x \right] -  p^{\pi}(x)p^{\pi}(x) g_x \right] \\
&= -\frac{1}{g_x^2} p^{\pi}(x)
\end{align}
Since we are only considering $p^{\pi}(x)$ > 0, then this second derivative $-\frac{1}{g_x^2} p^{\pi}(x) < 0$. This means that the critical point is a local maximum. Furthermore, since there is only one critical point, and the second derivative is always negative, this local maximum is the global maximum. Thus $g^\star_x = \frac{1}{p^{\pi}(x)}$. This also means that
\begin{align}
    h^\star(x, x') &= \1(x = x')g^\star_x \\
    &= \frac{\1(x = x')}{p^{\pi}(x)}
\end{align}
Plugging this back into \GEM:
\begin{align}
\text{\GEM}(h^\star, \pi) &= \sum_x p^{\pi}(x) \left[\ln \frac{1}{p^{\pi}(x)} \right] -  \sum_x p^{\pi}(x) \sum_{x'} p^{\pi}(x') \frac{\1(x = x')}{p^{\pi}(x)} + 1 \\
&= \sum_x p^{\pi}(x) \left[\ln \frac{1}{p^{\pi}(x)} \right] -  \sum_x p^{\pi}(x) p^{\pi}(x) \frac{1}{p^{\pi}(x)} + 1 \\
&= \sum_x p^{\pi}(x) \left[- \ln p^{\pi}(x) \right] \\
&= H(p^{\pi})
\end{align}
We get that maximising w.r.t. $h$ results in simply the Shannon entropy of $p^{\pi}$. Finally, we put it together to get:
\begin{align}
    \max_{h, \pi} \text{\GEM}(h, \pi) &= \max_{\pi} \max_{h} \text{\GEM}(h, \pi) \\
    &= \max_{\pi} H(p^{\pi})
\end{align}
Thus the maximiser for $\pi$ is the optimal Shannon MSVE policy $\pi^\star_E = \underset{\pi}{\mathrm{argmax}} \, H(p^{\pi})$.
\end{proof}

\subsection{Proof of \Cref{thm:taco.disc}}
\tacodisccor*
\begin{proof}
This corollary follows immediately from the proof of \Cref{thm:taco.disc.gen}.
\end{proof}

\subsection{Proof of \Cref{thm:taco.kernel}}
\tacokernel*
\begin{proof}
We restate the full, geometry-aware \GEM objective function here:
\begin{align}
\text{\GEM}_{k}(g, \pi) &\triangleq  \E_{x\sim p^{\pi}} \left[ \ln(g(x)) \right] -\E_{x,x' \sim p^{\pi}} \left[k(x, x') g(x) \right] + 1.
\end{align}
Recall that the similarity profile is $p_k^{\pi}(x) = \E_{x' \sim p^\pi} [k(x, x')] $. Then we can slightly rewrite the \GEM as follows:
\begin{align}
\text{\GEM}_{k}(g, \pi) &\triangleq  \E_{x\sim p^{\pi}} \left[ \ln(g(x)) \right] -\E_{x \sim p^{\pi}} \left[ g(x) p_k^{\pi}(x) \right] + 1.
\end{align}
We then follow the same proof structure as for the proof for \Cref{thm:taco.disc.gen}, to first prove the result for the finite case. We first decompose the maximisation into first maximising for $g$, and then maximising for $\pi$:
\begin{align}
    \max_{g, \pi} \text{\GEM}_k(g, \pi) &= \max_{\pi} \max_{g} \text{\GEM}_k(g, \pi)
\end{align}
Then we reduce the problem of maximising $g$ to pointwise maximisation of each separate variable $g_x = g(x)$ of the following:
\begin{align}
\max_{g_x} \; p^{\pi}(x) \ln (g_x) - p^{\pi}(x) g_x p_k^{\pi}(x) + 1.
\end{align}
Solving for the critical point we get, similar to before, that:
\begin{align}
    g^\star_x &= \frac{1}{p_k^{\pi}(x)}
\end{align}
where instead of $p^{\pi}(x)$ we now get the similarity profile $p_k^{\pi}(x)$. By the same argument as in the proof for \Cref{thm:taco.disc.gen}, the second derivative is negative, and thus this is the global maximum. Plugging this back in, we get that
\begin{align}
    \max_{g, \pi} \text{\GEM}_k(g, \pi) &= \max_{\pi} \max_{g} \text{\GEM}_k(g, \pi) \\
    &= \max_{\pi} \E_{x\sim p^{\pi}} \left[ - \ln p_k^{\pi}(x) \right] \\
    &= \max_{\pi} H_k(p^{\pi})
\end{align}
Where the maximiser for $\pi$ is $\pi^\star_E = \underset{\pi}{\mathrm{argmax}} \, H_k(p^{\pi})$, the geometry-aware Shannon MSVE policy.

\textbf{Extension to the continuous case.} In this case we start by similarly decomposing the maximisation:
\begin{align}
    \max_{g, \pi} \text{\GEM}_k(g, \pi) &= \max_{\pi} \max_{g} \text{\GEM}_k(g, \pi) \\
    &= \max_{\pi} \max_{g} \E_{x\sim p^{\pi}} \left[ \ln(g(x)) \right] -\E_{x \sim p^{\pi}} \left[ g(x) p_k^{\pi}(x) \right] + 1 \\
    &= \max_{\pi} \max_{g} \E_{x\sim p^{\pi}} \left[ \ln(g(x)) - g(x) p_k^{\pi}(x) \right] + 1.
\end{align}
From the proof of the discrete case, we know that $g^\star_x = \frac{1}{p_k^{\pi}(x)}$ is the pointwise maximiser for $g(x)$ for every $x$ inside the expectation. Thus, this is also the maximiser for the entire expectation, as maximising pointwise is the best that we can do. Therefore the result also holds for continuous distributions (the similarity profile $p_k^{\pi}(x)$ is well-defined when the similarity $k$ is smooth and nicely integrable).
\end{proof}

\subsection{Proof of \Cref{prop:aco.grad}}
\gradprop*
\begin{proof}
We notice that the  gradient term in \cref{eq:grad.si}  is an unbiased empirical estimate of  the gradient of \GEM w.r.t. the parameters of $g$. In the case of $\nabla_{\theta}\GEM_k(g,\pi)$ given the fact that $k$ is a symmetric function of $x$ and $x'$ we have the following from the product rule
\begin{align*}
\nabla_{\theta}\GEM_k(g,\pi)=   \int \nabla_{\theta}P^\pi(x)\left[\ln(g(x))-\E_{x'\sim p^{\pi}}(k(x,x')(g(x)+g(x')))\right]dx.
\end{align*}

From the policy gradient theorem \citep{sutton1999policy} we deduce

\begin{align*}
\nabla_{\theta}\GEM_k(g,\pi)=  \E\left[\sum_{t=1}^{T-1} \nabla_{\theta} \ln(\pi(a_t,x_t))\sum_{\tau=t+1}^T(\ln(g(x_{t+\tau}))-\E_{x'\sim p^{\pi}}\left( k(x_{t+\tau},x')\left(g(x_{t+\tau})+g(x'))\right)\right) \right],
\end{align*}
where the outer expectation is with respect to the stochastic process induced by the policy $\pi$. The result then follows by replacing the expectations with their empirical estimates along the trajectory $(x_1,x_2,\dots,x_T)$.

\end{proof}

\subsection{Proof of \Cref{prop:discrete}}
\propdisc*
\begin{proof}
We restate the full \GEM objective here:
\begin{align}
\text{\GEM}_{k}(g, \pi) &\triangleq  \E_{x\sim p^{\pi}} \left[ \ln(g(x)) \right] -\E_{x,x' \sim p^{\pi}} \left[k(x, x') g(x) \right] + 1.
\end{align}
We know from \Cref{thm:taco.kernel} that
\begin{align}
    \max_{g, \pi} \text{\GEM}_{k}(g, \pi) &= \max_{\pi} H_k(p^{\pi}) \\
    &= \max_{\pi} \E_{x\sim p^{\pi}} \left[ - \ln p_k^{\pi}(x) \right]
\end{align}
where $g^\star = \frac{1}{p_k^{\pi}(x)}$. Then let's further maximise w.r.t. $k$:
\begin{align}
    \max_{k, g, \pi} \text{\GEM}_{k}(g, \pi) &= \max_{\pi} \max_{k} \E_{x\sim p^{\pi}} \left[ - \ln p_k^{\pi}(x) \right] \\
    &= \max_{\pi} \max_{k} \E_{x\sim p^{\pi}} \left[ - \ln \E_{x' \sim p^{\pi}} [k(x, x')] \right] \\
    &= \max_{\pi} \max_{k} \E_{x\sim p^{\pi}} \left[ - \ln \left( \sum_{x'} p^{\pi}(x') k(x, x') \right) \right]
\end{align}
In order to maximise this expression, we want to minimise the sum inside the logarithm. However we cannot simply set $k(x, x') = 0$, as we are constrained by requiring that $k(x, x) = 1$. Therefore we let $k(x, x) = 1$, and for all $x \neq x'$, we can set $k(x, x') = 0$. This is equivalent to setting $k(x, x') = \1(x = x')$. Thus
\begin{align}
    k^\star(x, x') = \1(x = x')
\end{align}
and plugging that in we get:
\begin{align}
    \max_{k, g, \pi} \text{\GEM}_{k}(g, \pi) &= \max_{\pi} \max_{k} \E_{x\sim p^{\pi}} \left[ - \ln \left( \sum_{x'} p^{\pi}(x') k(x, x') \right) \right] \\
    &= \max_{\pi} \E_{x\sim p^{\pi}} \left[ - \ln p^{\pi}(x) \right] \\
    &= \max_{\pi} H(p^{\pi}) \\
    &= H(p^{\pi^\star_E})
\end{align}
and we recover the discrete Shannon entropy.
\end{proof}

\subsection{Generalization of {\GEM} to Tsallis Entropy}

\textbf{Tsallis Entropy.} \emph{Tsallis entropy} is defined as \citep{tsallis1988possible}:
\begin{align}
    H_{\alpha}(p) \triangleq \frac{1}{\alpha-1}\left( 1 - \mathbb{E}_{x \sim p} \left[ p(x)^{\alpha-1} \right] \right)
\end{align}
for the real $\alpha$. Taking the limit as $\alpha \rightarrow 1$, this simplifies to Shannon entropy. Tsallis entropy is sometimes called a \textit{pseudo-entropy} since it satisfies all the properties of the standard entropy except additivity~\citep{tsallis1988possible}. Here, we extend this definition to a geometry-aware version of Tsallis entropy (similar to geometry-aware Shannon entropy) by replacing $p$ with its similarity profile $p_k$:
\begin{align}
    H_{\alpha,k}(p) \triangleq \frac{1}{\alpha-1}\left( 1 - \mathbb{E}_{x \sim p} \left[ p_k(x)^{\alpha-1} \right] \right)
\end{align}

We now introduce the generalisation of the \GEM objective function:
\begin{align}
\text{\GEM}_{\alpha, k}(g, \pi) &\triangleq  \frac{1}{\alpha - 1} + \left(1 - \frac{1}{\alpha - 1}\right) \E_{x\sim p^{\pi}} \left[ g(x)^{1 - \alpha} \right] -\E_{x,x' \sim p^{\pi}} \left[k(x, x') g(x)^{2 - \alpha} \right]
\end{align}
where $\alpha < 2$. For $\alpha = 1$, we take the limit of $\left( \alpha - 1 \right) \rightarrow 0$ and use the identity $\lim_{\delta \rightarrow 0} \frac{x^{\delta} - 1}{\delta} = \ln x$ to recover the Shannon case:
\begin{align}
\text{\GEM}_{1, k}(g, \pi) &\triangleq \E_{x\sim p^{\pi}} \left[ \ln(g(x)) \right] -\E_{x,x' \sim p^{\pi}} \left[k(x, x') g(x) \right] + 1.
\end{align}
Next, for $\alpha \neq 1$, we follow the structure of the proof for the Shannon case. We first decompose the maximisation:
\begin{align}
    \max_{g, \pi} \text{\GEM}_{\alpha, k}(g, \pi) &= \max_{\pi} \max_{g} \text{\GEM}_{\alpha, k}(g, \pi)
\end{align}
Then we proceed to do a pointwise maximisation for $g_x = g(x)$:
\begin{align}
    \max_{g_x} \; \left(1 - \frac{1}{\alpha - 1}\right) p^{\pi}(x)  g_x^{1 - \alpha} - p^{\pi}(x) g_x^{2 - \alpha}\E_{x' \sim p^{\pi}} \left[k(x, x')\right]
\end{align}
Note again that for $p^{\pi}(x) = 0$, this becomes a constant and $g_x$ can take on any value. Thus we restrict ourselves to the case where $p^{\pi}(x) > 0$. We find the critical point by solving for the zeros of the derivative:
\begin{align}
0 &= \frac{d}{d g_x}  \left(1 - \frac{1}{\alpha - 1}\right) p^{\pi}(x)  g_x^{1 - \alpha} - p^{\pi}(x) g_x^{2 - \alpha}\E_{x' \sim p^{\pi}} \left[k(x, x')\right] \\
\implies 0 &=  (1 - \alpha)\left(1 - \frac{1}{\alpha - 1}\right) p^{\pi}(x)  g_x^{- \alpha} - (2 - \alpha)p^{\pi}(x) g_x^{1 - \alpha}\E_{x' \sim p^{\pi}} \left[k(x, x')\right] \\
\implies 0 &= g_x^{- \alpha} - g_x^{1 - \alpha}\E_{x' \sim p^{\pi}} \left[k(x, x')\right] \\
\implies g_x &=  \frac{1}{\E_{x' \sim p^{\pi}} \left[k(x, x')\right]} = \frac{1}{p_k^{\pi}(x)}
\end{align}
Thus the critical point is identical to the Shannon case. Next we examine the second derivative:
\begin{align}
& \frac{d^2}{d g_x^2}  \left(1 - \frac{1}{\alpha - 1}\right) p^{\pi}(x)  g_x^{1 - \alpha} - p^{\pi}(x) g_x^{2 - \alpha}\E_{x' \sim p^{\pi}} \left[k(x, x')\right] \\
&=  - \alpha(1 - \alpha)\left(1 - \frac{1}{\alpha - 1}\right) p^{\pi}(x)  g_x^{- \alpha-1} - (1 - \alpha)(2 - \alpha)p^{\pi}(x) g_x^{- \alpha}\E_{x' \sim p^{\pi}} \left[k(x, x')\right] \\
&=  (2 - \alpha) p^{\pi}(x) g_x^{\alpha + 1} \left(- \alpha - (1 - \alpha) g_x \E_{x' \sim p^{\pi}} \left[k(x, x')\right] \right)
\end{align}
We plug in the critical point and this second derivative simplifies to:
\begin{align}
    & (2 - \alpha) p^{\pi}(x) \left( \frac{1}{\E_{x' \sim p^{\pi}} \left[k(x, x')\right]} \right)^{\alpha + 1} \left(- \alpha - (1 - \alpha) \left( \frac{1}{\E_{x' \sim p^{\pi}} \left[k(x, x')\right]}\right) \E_{x' \sim p^{\pi}} \left[k(x, x')\right] \right) \\
    &= -(2 - \alpha) p^{\pi}(x) \left( \frac{1}{\E_{x' \sim p^{\pi}} \left[k(x, x')\right]} \right)^{\alpha + 1} \\
    &< 0
\end{align}
So this critical point is a maximum as long as $\alpha < 2$. Thus the maximiser $g^\star$ is exactly the same as for the Shannon case:
\begin{align}
    g^\star(x) &= \frac{1}{p_k^{\pi}(x)}
\end{align}
Plugging this back into the full objective we get:
\begin{align}
    \max_{g, \pi} \text{\GEM}_{\alpha, k}(g, \pi) &= \max_{\pi} \max_{g} \text{\GEM}_{\alpha, k}(g, \pi) \\
    &= \max_{\pi}  \frac{1}{\alpha - 1} + \left(1 - \frac{1}{\alpha - 1}\right) \E_{x\sim p^{\pi}} \left[ \left( \frac{1}{p_k^{\pi}(x)} \right)^{1 - \alpha} \right] -\E_{x,x' \sim p^{\pi}} \left[k(x, x') \left( \frac{1}{p_k^{\pi}(x)} \right)^{2 - \alpha} \right] \\
    &= \max_{\pi}  \frac{1}{\alpha - 1} + \left(- \frac{1}{\alpha - 1}\right) \E_{x\sim p^{\pi}} \left[ \left( \frac{1}{p_k^{\pi}(x)} \right)^{1 - \alpha} \right] \\
    &= \max_{\pi}  \frac{1}{\alpha - 1} \left( 1 - \E_{x\sim p^{\pi}} \left[ p_k^{\pi}(x)^{\alpha - 1} \right] \right) \\
    &= \max_{\pi} H_{\alpha,k}(p^{\pi})
\end{align}
Thus we are able to find the geometry-aware Tsallis MSVE policy with this generalised \GEM.

\clearpage

\section{Experiment Details}
\label{app:expdetails}

\subsection{1D bimodal Gaussian Learning for \Cref{sec:similarity}}

\begin{algorithm}
 \KwInputs{$x$, $n_{\mathrm{bucket}}$, $m_{\min}$, $m_{\max}$}
 Let $y \triangleq \frac{x - m_{\min}}{m_{\max} - m_{\min}}$ \;
 Let $b \triangleq \{ 0.5, 1.5, 2.5, \dots, n_{\mathrm{bucket}} - 0.5 \}$ \;
 Return $\{ e^{-n_{\mathrm{bucket}}\vert b_i - y \vert} \mid b_i \in b \}$ \;
 \caption{soft1hot($x$)}
 \label{alg:soft1h}
\end{algorithm}

\begin{algorithm}
 \KwInputs{Minibatches $B_1=\{x_i\}$, $B_2=\{x_j\}$, neural networks $g$, $f$, scalars $c$, $n_{\text{neg}}$}
 Let $g_x \triangleq \text{softplus}(g(x)) + 10^{-8}$ \;
 Let $k(x, x') \triangleq e^{-c\|f(x) - f(x')\|_2}$ \;
 Let $\text{REG}(x) \triangleq \|f(x)\|_2^2 $ \;
$R_{\GEM}(x_i) \triangleq 1 + \log g_{x_i} -  \left( g_{x_i} + g_{x'} \right) \frac{1}{n_{\text{neg}}}\sum_{m=1}^{n_{\text{neg}}} k(x_i,x_m')$ \;
\qquad where $x_m' \sim \text{RandomSample}(B_2)$ \;
Then $\text{LOSS} \triangleq \frac{1}{|B_1|}\sum_{i=1}^{|B_1|} \left( 1 + \log g_{x_i} -  g_{x_i} \frac{1}{n_{\text{neg}}}\sum_{m=1}^{n_{\text{neg}}} k(x_i,x_m') + w_{\text{reg}} \text{REG}(x_i) \right)$ \;
\qquad where $x_m' \sim \text{RandomSample}(B_2)$ \;
 Return ($R_{\GEM}(x_i)$, $\text{LOSS}$) \;
 \caption{GEMLoss($B_1,B_2$)}
 \label{alg:taco}
\end{algorithm}

In this part, we provide experimental details for \GEM in \cref{fig:tacogaussian} given in \cref{sec:similarity}. The loss is computed in \cref{alg:taco} and optimised with the \ADAM optimiser. Note that we use the softplus activation on top of $g$ to make sure that it always returns something positive. We also add a small regularisation to the norm of the output of $f$ (REG) to prevent divergence to infinity.

Next, we outline the data generating distribution. Let $N, N' \sim \mathrm{TruncNorm}(0, 1)$ be a random variable from the standard truncated normal normal distribution with mean $0$, variance $1$, truncated between $\{-2, 2\}$. Let $U \sim \mathrm{Uniform}(0, 1)$ be a random variable from the uniform distribution in $[0, 1]$. Then the bimodal distribution used for generating data is
\begin{align*}
    X &\triangleq \frac{30}{8} \left( \mathbf{I}(U < 0.3)N_1 + \mathbf{I}(U \geq 0.3)N_2 + 4 \right) \\
    N_{1} &= N - 2 \\
    N_{2} &= N' + 2
\end{align*}
$X$ has been scaled and shifted so that its support is $[0, 30]$. The discretised version has $30$ points equally spaced in $[0, 30]$, where the probability is computed by normalizing the density values at those $30$ points. In order to take full advantage of neural networks, we use a soft-one-hot encoding (\cref{alg:soft1h}) of the scalar input before passing it to the inner layers of the neural networks of $f$ and $g$. Experiment hyperparameters are outlined in \cref{tab:gaussianhyper}.

\begin{table}
    \centering
    \begin{tabular}{c||c|c}
         & Fixed $k$ & Trained $k$ \\
         \hline \hline
         $f$ & identity & MLP[soft1hot, Linear(128), relu, Linear(128), relu, Linear(64)] \\
         \hline
         $c$ & $2$ & $1$ \\
         \hline
         Batch size & \multicolumn{2}{c}{$256$} \\
         \hline
         $\beta$ & \multicolumn{2}{c}{$0$} \\
         \hline
         $n_{\mathrm{neg}}$ & \multicolumn{2}{c}{$8$} \\
         \hline
         $w_{\mathrm{reg}}$ & \multicolumn{2}{c}{$10^{-6}$} \\
         \hline
         $n_{\mathrm{bucket}}$ & \multicolumn{2}{c}{$30$} \\
         \hline
         $m_{\min}$ & \multicolumn{2}{c}{$0$} \\
         \hline
         $m_{\max}$ & \multicolumn{2}{c}{$30$} \\
         \hline
         $g$ & \multicolumn{2}{c}{MLP[soft1hot, Linear(128), relu, Linear(128), relu, Linear(1)]} \\
         \hline
         Training Steps & \multicolumn{2}{c}{$1000$} \\
         \hline
         Optimiser & \multicolumn{2}{c}{Adam(learning rate = $10^{-3}$, $\beta_1 = 0$, $\beta_2 = 0.95$)}
    \end{tabular}
    \vspace{0.5em}
    \caption{Hyperparameters for 1D Bimodal Gaussian}
    \label{tab:gaussianhyper}
\end{table}

\subsection{{\GEM} Experiment Details}

The plots in the experiments (\cref{sec:experiments}) are computed by splitting the x-axis into 20 buckets of equal size, and computing the average of each bucket. This is done to smooth the data for each individual run. Then 5 runs are averaged together, and the standard error with 95\% confidence interval is computed and shown as the shaded region.

The dynamics of the gridworlds in \cref{fig:envs} have 5 standard actions of [no-op, up, down, left, right]. The initial state of the agent is picked uniformly at random from any of the blue square locations. The blue locations not chosen are normal free blocks. In each episode, a random green square location is chosen and a reward block (with reward $1.0$) is placed there. The green locations not chosen are normal free blocks. The episode ends after the agent moves over the reward block and sees the reward.

The episode length for \texttt{2-Rooms} is $30$, and \texttt{16-Leaves} is $18$. The state representation of these environments are pixel image arrays, i.e., 3D arrays of shape $(\mathrm{width}, \mathrm{height}, 3)$, and are shown in \cref{fig:envobs}; they consists of an upper world map, which shows a map of the rooms as well as which room the agent is in and which room the reward is in, and a lower room map, which shows the map of the current room. In the noisy version of \texttt{2-Rooms}, there is a square with random red and green components ($256^2$ different colours) in-between the upper and lower maps. Each square is $8 \times 8$ pixels.

The visitation entropy (\cref{fig:sweepdist}) is tracked by keeping an exponential moving average with a decay of $0.99$ of the visitation count for each open square, and then computing the empirical Shannon entropy over these counts. The heatmaps (\cref{fig:sweepdist}) are generated using the same counts, but are normalised by dividing all counts by the largest count so that the most visited square has value 1, corresponding to pure black.

The environments Cartpole Swingup and Mountain Car are standard, continuous-state, sparse reward domains with episode length 1000. We use the implementations of \citep{osband2020bsuite}.

A training step is outlined in \cref{alg:moretacomax}. The \AR loss is computed from \cref{alg:arloss}. We compute the \GEM intrinsic reward from \cref{alg:taco}, normalise it, and mix it with the extrinsic reward to use for policy gradient (\cref{alg:policygradient}) to train $\pi$ and $V$, which is a standard actor-critic algorithm. $g$ and $f$ are optimised directly with the computed loss.

For all experiments, we use a multi-process agent, where we have 64 processes running the agent policy and gathering data from the environment in parallel. The data is gathered and sent to another processes that computes the gradients and updates the parameters.

The basic network architecture used is shown in \cref{tab:commonhyper}, along with training hyperparameters. For the continuous domains of Cartpole Swingup and Mountain Car, we use a slightly different architecture since their state is not an image (\cref{tab:continuoustorso}). The RNN cores for the value $V$ and policy $\pi$ networks take as input the concatenation of the torso output (torso is applied to the state), a 1-hot representation of the previous action, the previous extrinsic reward, and a soft 1-hot representation of the current timestep index. The embedding function $f$ and the $g$ function only take state as input, without concatenating other quantities. The batch size in \cref{tab:commonhyper} denotes the total batch size, i.e. it is the combination of $B_1$ and $B_2$ in \cref{alg:moretacomax}, each of which is half of the size. Each minibatch contains batch size number of traces, which are short segments of episodes. The trace period denotes which indices the traces start. For example, a trace length of $20$ with a trace period of $10$ means that the traces are timesteps $(1,2, \dots, 20)$, $(10,11,\dots,30)$, $(20, 21, \dots, 40)$, etc.. The traces are randomly chosen from within the episode so that the traces are not all in lockstep within a minibatch. The trace lengths and trace periods are shown in \cref{tab:envhyper}.

The empirical oracle MSVE baseline (\cref{sec:sweep2timescale}) shares the same policy gradient algorithm as \GEM, except that the intrinsic reward is computed as $(-\ln \mathrm{count}(x))$, where $\mathrm{count}(x)$ is an exponential moving average of the true count of state $x$, with decay $0.99$. The intrinsic reward is them normalised in the same way and summed with the extrinsic reward.

The baseline RND shares the same policy gradient algorithm as \GEM, except that the intrinsic reward is computed through the RND method. The network used for the fixed random target and the predictor is the same as torso in \cref{tab:commonhyper}. Reward and observation normalisation are applied as per RND, with the exponential decay being $0.95$. The bonus is then scaled by $0.02$ before combined with the extrinsic reward.

The baseline NGU also shares the same policy gradient algorithm as \GEM, except that the intrinsic reward is computed through the NGU method. The only modification we apply is scaling the intrinsic reward before combining it with the extrinsic reward. The state embedding is trained using action prediction as in \citep{Badia2020Never}, with an action predictor network that is an MLP with a relu hidden layer of size 256 with a linear output of size the same as the number of actions.

The environment specific parameters and scaling is show in \cref{tab:envhyper}.

\begin{algorithm}
 \KwInputs{Minibatch $B = \{ (x_{t}, a_t, x_{t+1}) \}$, neural network $f$, Huber exponent $q$, Huber offset $\delta$}
 Let $\mathrm{LOSS}_t \triangleq \left(\delta^q + \|f(x_t) - f(x_{t+1})\|_2^q \right)^{1/q}$ \;
 Then $\mathrm{LOSS} \triangleq \frac{1}{|B|}\sum_{t} \mathrm{LOSS}_t $ \;
 Return $\mathrm{LOSS}$
 \caption{ARLoss($B$)}
 \label{alg:arloss}
\end{algorithm}

\begin{algorithm}
 \KwInputs{Minibatch of trajectories $\{(x_t^{(i)}, a_t^{(i)}, r_t^{(i)})\}$, episode length $T$, policy logits $\pi$, value function $V$, action entropy cost $w_{\mathrm{ent}}$}
 Let $\mathrm{TRACE}^{(i)}(t, m) \triangleq \left( \sum_{j=t}^{t + m} r_j^{(i)} \right) + V(x_{t + m + 1}^{(i)})$ \;
 Let $\mathrm{RET}^{(i)}(t) \triangleq \frac{1}{T - t} \sum_{m=0}^{T - t - 1} \mathrm{TRACE}^{(i)}(t, m)$ \;
 Let $\mathrm{VLOSS}^{(i)}(t) \triangleq \|V(x_t^{(i)}) - \mathrm{StopGradient}(\mathrm{RET}^{(i)}(t))\|_2^2$ \;
 Then $\mathrm{VLOSS} \triangleq \frac{1}{|B|}\sum_{i, t} \mathrm{VLOSS}^{(i)}(t) $ \;
 Let $\mathrm{ENT}^{(i)}_t \triangleq \mathrm{ShannonEntropy}(\pi(x_t^{(i)}, \cdot)) $\;
 Then $\mathrm{ENT} \triangleq \frac{1}{|B|}\sum_{i, t} \mathrm{ENT}^{(i)}_t$ \;
 Let $ \mathrm{PLOSS}^{(i)}(t) \triangleq -\pi(x_t^{(i)}, a_t^{(i)}) \cdot \mathrm{StopGradient}(r_t^{(i)} + V(x_{t+1}^{(i)}) - V(x_t^{(i)})) $ \;
 Then $\mathrm{PLOSS} \triangleq \frac{1}{|B|}\sum_{i, t} \mathrm{PLOSS}^{(i)}_t$ \;
 Return $\left( \mathrm{PLOSS} + \mathrm{VLOSS} - w_{\mathrm{ent}} \mathrm{ENT} \right)$
 \caption{PolicyGradient($B$)}
 \label{alg:policygradient}
\end{algorithm}

\begin{algorithm}
 \KwInputs{Minibatches of trajectories $B_1 = \{(x_i, a_i, r_i)\}$ and $B_2$, intrinsic reward mean $m_{\GEM}$, intrinsic reward scale $s_{\GEM}$, adjacency regularisation scale $C$}
 Let ($R_{\GEM, 1}(x_i)$, $\text{LOSS}_1$) = $\mathrm{GEMLoss}(B_1, B_2)$ \;
 Let ($R_{\GEM, 2}(x_i)$, $\text{LOSS}_2$) = $\mathrm{GEMLoss}(B_2, B_1)$ \;
 Let $R_{\GEM}(x_i) = \frac{1}{2}(R_{\GEM, 1}(x_i) + R_{\GEM, 2}(x_i))$ \;
 Let $R_{\GEM}^{\mathrm{normalised}}(x_i) = \frac{R_{\GEM}(x_i) - \mu}{\sigma} \cdot s_{\GEM} + m_{\GEM}$ \;
 \qquad where $\mu$ and $\sigma$ are exponential running average of the mean and std with decay $0.99$ \;
 Let $\text{LOSS} = \frac{1}{2}(\text{LOSS}_1 + \text{LOSS}_2 + C \cdot \text{ARLoss}(B_1) + C \cdot \text{ARLoss}(B_2))$ \;
 Take gradient step of $\min_f \text{LOSS}$ \;
 Take gradient step of $\min_g \text{LOSS} $ \;
 Let $R_{\mathrm{total}}(x_i) \triangleq r_i + R_{\GEM}^{\mathrm{normalised}}(x_i)$ for all $x_i \in B_1 \cup B_2$ \;
 Let $B_{\mathrm{total}} \triangleq \{(x_i, a_i, R_{\mathrm{total}}(x_i))\}$ for all $x_i \in B_1 \cup B_2$ \;
 Take gradient step of $\min_{\pi, V} \mathrm{PolicyGradient}(B_{\mathrm{total}})$ \;
 \caption{Detailed \GEM Training Step}
 \label{alg:moretacomax}
\end{algorithm}

\begin{figure}
    \centering
    \includegraphics[width=1.0\textwidth]{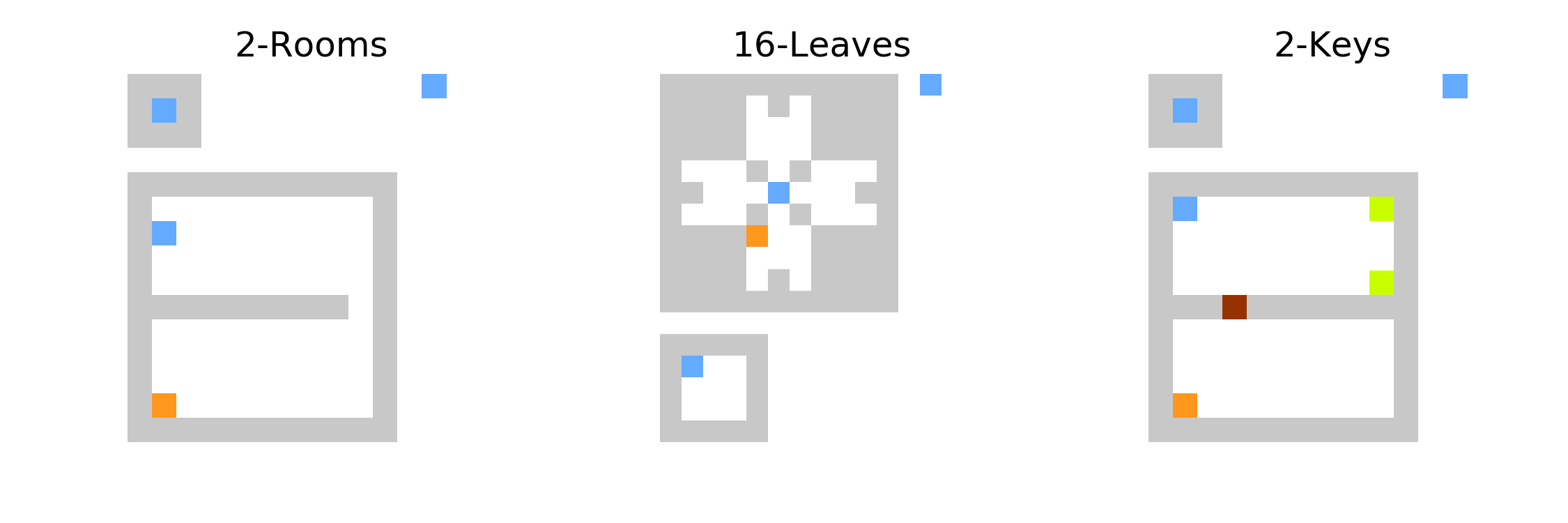}
    \caption{State for \texttt{2-Rooms}, \texttt{16-Leaves} and \texttt{2-Keys}, showing noise block in-between world and room maps. The agent is in blue. The reward square location is orange. Keys are yellow-green. The door is brown. The blue square at the top-right corner of the image indicates that the agent only has one life (unused in our experiments).}
    \label{fig:envobs}
\end{figure}

\begin{table}
    \centering
    \begin{tabular}{c||c}
        \multirow{5}{*}{torso} & Conv2D($32$ channels, kernel size $8 \times 8$, stride $4 \times 4$) \\
        & relu \\
        & Conv2D($32$ channels, kernel size $4 \times 4$, stride $2 \times 2$) \\
        & relu \\
        & Conv2D($64$ channels, kernel size $3 \times 3$, stride $1 \times 1$) \\
        & relu and flatten \\
        & Linear(256) and relu \\
        \hline
        rnntorso & copy of torso \\
        \hline
        fhead & MLP[Linear(256), relu] \\
        \hline
        ghead & copy of fhead \\
        \hline
        rnncore & LSTM(256) \\
        \hline
        pihead & copy of fhead \\
        \hline
        vhead & copy of fhead \\
        \hline
         $f$ & $\mathrm{torso} \circ \mathrm{fhead} \circ \mathrm{Linear(256)}$ \\
         \hline
         $g$ & $\mathrm{torso} \circ \mathrm{ghead} \circ \mathrm{Linear(1)}$ \\
         \hline
         $\pi$ & $\mathrm{rnntorso} \circ \mathrm{rnncore} \circ \mathrm{pihead} \circ \mathrm{Linear(5)}$ \\
         \hline
         $V$ & $\mathrm{rnntorso} \circ \mathrm{rnncore} \circ \mathrm{vhead} \circ \mathrm{Linear(1)}$ \\
         \hline
         Batch size & $256$ \\
         \hline
         $n_{\mathrm{neg}}$ & $32$ \\
         \hline
         $c$ & $1$ \\
         \hline
         $q$ & $4$ \\
         \hline
         $w_{\mathrm{reg}}$ & $10^{-4}$ \\
         \hline
         Training Steps & $100000$ \\
         \hline
         Optimiser & Adam(learning rate = $10^{-4}$, $\beta_1 = 0$, $\beta_2 = 0.95$)
    \end{tabular}
    \vspace{0.5em}
    \caption{Common Hyperparameters}
    \label{tab:commonhyper}
\end{table}

\begin{table}
    \centering
    \begin{tabular}{c||c}
        \hline
        torso & Linear(256) and relu \\
        \hline
    \end{tabular}
    \vspace{0.5em}
    \caption{Cartpole Swingup and Mountain Car Specific Torso}
    \label{tab:continuoustorso}
\end{table}

\begin{table}
    \centering
    \begin{tabular}{c|c|c|c|c}
        & 2-Rooms & 16-Leaves & Cartpole Swingup & Mountain Car \\
        \hline \hline
        $w_{\mathrm{ent}}$ & $10^{-3}$ & $10^{-3}$ & $10^{-2}$ & $10^{-2}$ \\
         \hline
        $m_{\GEM}$ & $0.005$ & $0.005$ & $0.15$ & $0.7$ \\
        \hline
        $s_{\GEM}$ & $0.005$ & $0.005$ & $0.15$ & $0.25$ \\
        \hline
        RND intrinsic reward scale & $0.001$ & $0.001$ & $0.02$ & $0.1$ \\
        \hline
        NGU w/o RND intrinsic reward scale & $0.0003$ & $0.0003$ & $0.02$ & $0.05$\\
        \hline
        NGU w/ RND intrinsic reward scale & $0.0003$ & $0.0003$ & $0.01$ & $0.07$ \\
        \hline
        Episode length & $30$ & $18$ & $1000$ & $1000$ \\
        \hline
        Trace Length & $20$ & $14$ & $20$ & $20$ \\
        \hline
        Trace Period & $10$ & $7$ & $10$ & $10$ \\
    \end{tabular}
    \vspace{0.5em}
    \caption{Env Specific Hyperparameters}
    \label{tab:envhyper}
\end{table}

\clearpage

\section{Additional Experiments}
\label{app:additional_experiments}

\subsection{Irreversible Transitions}
\label{sec:irreversibletransitions}

Because \AR is symmetric, we take a closer look in this section as to what happens when we have asymmetric dynamics. 
We ran \GEM on the simple gridworld illustrated in \cref{fig:2keysenv} (with image representation shown in \cref{fig:envobs}).
The agent spawns at random in one of the blue cells, and the goal is in the green cell. 
To cross the bottleneck brown cell (the ``door''), the agent must first visit at least one of the yellow cells (the ``keys''). 
Visiting a yellow cell (``collecting the key'') turns it into an ordinary cell, as does visiting the brown door cell after having a key (``opening the door'').
Thus both picking up the key and opening the door are asymmetric, irreversible transitions.

\Cref{fig:2keysemb_legend} shows learned embeddings for different states, projected onto the first two principal components.
The different markers denote different configurations of the state (ignoring the agent position), and the colors correspond to different positions on the grid.
For example, the blue circles correspond to the positions on the top room before having collected any keys.
The purple positions correspond to the second room, and we can see that different configurations are embedded differently.
In particular, the embedding groups the lower-room states into three (purple) groups: 
In the left group (+) the agent has only collected the lower key, 
in the center group ($\square$) the agent has collected both keys,
and in the right group (|) the agent has only collected the upper key.
The terminal state (where the green goal is absent) is also embedded separately for each of the three groups.
The blue regions (lower room) are similarly grouped by the configuration of the keys and the door. \Cref{fig:2keysemb_legend} contains a more detailed description of the different configuration and position representations.

The main difference between the symmetric transitions and the asymmetric transitions is that asymmetric transitions results in embedding points that are slightly farther away from each other. This is because we only optimise \AR for one direction as opposed to both directions with symmetric transitions, i.e., the strength of adjacency regularization is halved. Nevertheless, the embeddings learned is still meaningful and \GEM~is still able to explore and solve this task.

\begin{figure}
    \centering
    \includegraphics[width=0.2\linewidth]{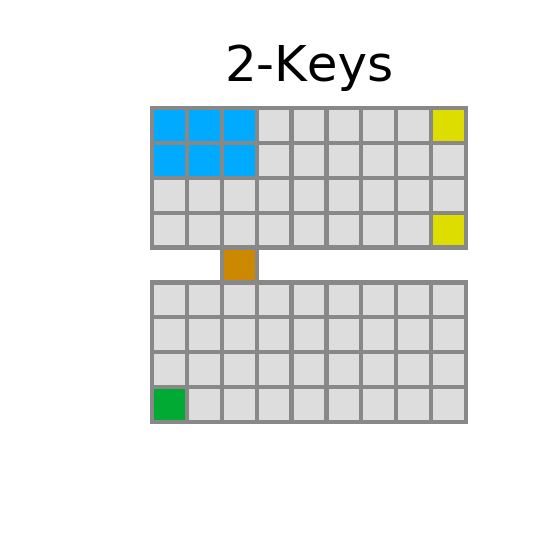}
    \caption{\texttt{2-Keys} Gridworld.}
    \label{fig:2keysenv}
\end{figure}

\begin{figure*}
     \centering
     \includegraphics[width=0.85\linewidth]{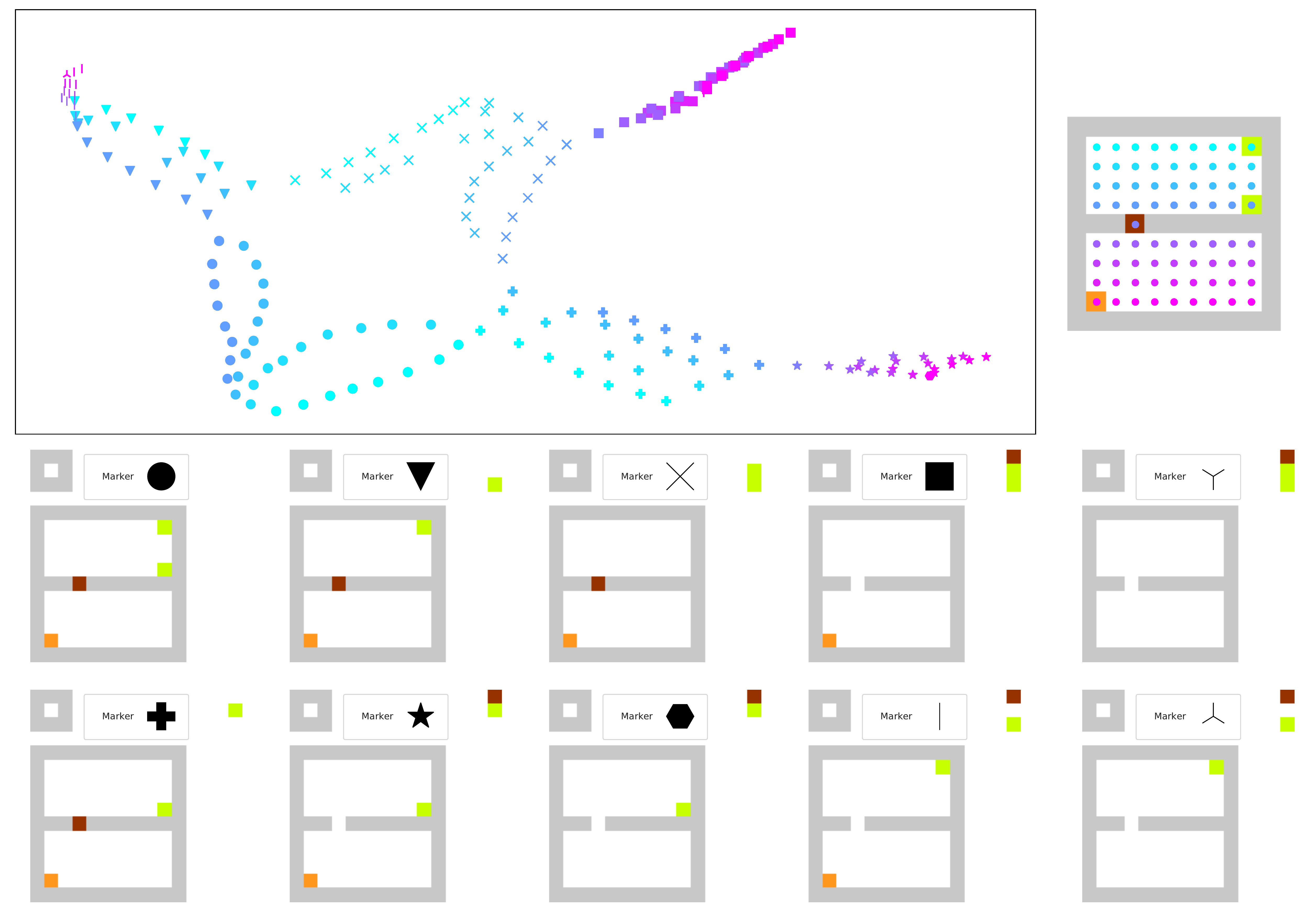}
     \caption{Embeddings learned by \GEM in the \texttt{2-Keys} Gridworld.}
     \label{fig:2keysemb_legend}

\end{figure*}

\end{document}